%%%% ijcai22.tex

\typeout{IJCAI--22 Instructions for Authors}

% These are the instructions for authors for IJCAI-22.

\documentclass{article}
\pdfpagewidth=8.5in
\pdfpageheight=11in
% The file ijcai22.sty is NOT the same as previous years'
\usepackage{ijcai22}

% Use the postscript times font!
\usepackage{times}
\usepackage{soul}
\usepackage{url}
\usepackage[hidelinks]{hyperref}
\usepackage[utf8]{inputenc}
\usepackage[small]{caption}
\usepackage{graphicx}
\usepackage{amsmath}
\usepackage{amsthm}
\usepackage{booktabs}
\usepackage{algorithm}
\usepackage{algorithmic}
\usepackage{dsfont}
\usepackage{xr}
\usepackage{amssymb}

\urlstyle{same}

\makeatletter
\newcommand{\rmnum}[1]{\romannumeral #1}
\newcommand{\Rmnum}[1]{\expandafter\@slowromancap\romannumeral #1@}
\makeatother
\externaldocument{supplement}

% the following package is optional:
%\usepackage{latexsym}

% See https://www.overleaf.com/learn/latex/theorems_and_proofs
% for a nice explanation of how to define new theorems, but keep
% in mind that the amsthm package is already included in this
% template and that you must *not* alter the styling.

\newtheorem{theorem}{Theorem}
\newtheorem{lemma}{Lemma}
\newtheorem{remark}{Remark}

% Following comment is from ijcai97-submit.tex:
% The preparation of these files was supported by Schlumberger Palo Alto
% Research, AT\&T Bell Laboratories, and Morgan Kaufmann Publishers.
% Shirley Jowell, of Morgan Kaufmann Publishers, and Peter F.
% Patel-Schneider, of AT\&T Bell Laboratories collaborated on their
% preparation.

% These instructions can be modified and used in other conferences as long
% as credit to the authors and supporting agencies is retained, this notice
% is not changed, and further modification or reuse is not restricted.
% Neither Shirley Jowell nor Peter F. Patel-Schneider can be listed as
% contacts for providing assistance without their prior permission.

% To use for other conferences, change references to files and the
% conference appropriate and use other authors, contacts, publishers, and
% organizations.
% Also change the deadline and address for returning papers and the length and
% page charge instructions.
% Put where the files are available in the appropriate places.

% PDF Info Is REQUIRED.
% Please **do not** include Title and Author information
\pdfinfo{
/TemplateVersion (IJCAI.2022.0)
}

\title{The Implicit Regularization of Momentum Gradient Descent with Early Stopping}

%% Single author syntax
%\author{
%    Author Name
%    \affiliations
%    Affiliation
%    \emails
%    pcchair@ijcai-22.org
%}

% Multiple author syntax (remove the single-author syntax above and the \iffalse ... \fi here)
% Check the ijcai22-multiauthor.tex file for detailed instructions
%\iffalse
\author{
%Paper ID: 1723
%\#1723
Li Wang$^1$ \and
Yingcong Zhou$^2$\And
Zhiguo Fu$^{1*}$
%Third Author$^{2,3}$\And
\affiliations
$^1$Northeast Normal University\\
$^2$ Beihua University
\emails
wangl024@nenu.edu.cn,
zycong0821@163.com,
fuzg432@nenu.edu.cn
}
%\fi

\begin{document}

\maketitle

\begin{abstract}
The study on the implicit regularization induced by gradient-based optimization is a longstanding pursuit.
%How to characterize the implicit regularization induced by gradient-based optimization mathematically is a
%hot topic
%longstanding pursuit
%in the theory of deep learning recently.
In the present paper, we characterize the implicit regularization of momentum gradient descent (MGD) with early stopping by comparing with the explicit $\ell_2$-regularization (ridge). In details, we study MGD in the continuous-time view, so-called momentum gradient flow (MGF), and show that its tendency is closer to ridge than the gradient descent (GD)~\cite{Ali2019} for least squares regression. Moreover, we prove that, under the calibration $t=\sqrt{2/\lambda}$, where $t$ is the time parameter in MGF and $\lambda$ is the tuning parameter in ridge regression, the risk of MGF is no more than 1.54 times that of ridge. In particular, the relative Bayes risk of MGF to ridge is  between 1 and 1.035 under the optimal tuning. The numerical experiments support our theoretical results strongly.
\end{abstract}

\section{Introduction}
Implicit regularization refers to the optimization algorithm's preference to implicitly choosing certain structured solution as if
some explicit regularization term appeared in its objective.
The study on the implicit regularization of optimization can be dated back to at least 30 years
~\cite{1990Generalization}.
%\underline{(add the reference of Morgan and Bourlard 1989).}
%How to characterize the implicit regularization of optimization mathematically attracted remarkable concerns.
%The study on the implicit regularization of optimization is a longstanding pursuit.
Recently, it has been shown that the implicit regularization of optimization may be a key to understanding the generalization mystery of deep learning~\cite{Zhang2016UnderstandingDeepLearning}.
%and how to characterize this regularization has been regarded as a major open problem in the theory of deep learning.
\nocite{Liu2020BadGM}
%\nocite{Arora2019ImplicitRI}
\nocite{Tarmoun2021UnderstandingTD}
\nocite{Zhang2021UnderstandingDL}
After that,
a series of studies on
%different interpretations and motivations of
the implicit regularization of optimization
for the various settings were launched,
%approached in many works,
including matrix factorization \cite{Gunasekar2018ImplicitRI,Arora2019ImplicitRI},
%\nocite{Razin2020ImplicitRI}
%\nocite{Razin2021ImplicitRI}
\nocite{Li2021TowardsRT}
classification tasks
~\cite{Soudry2018TheIB,Lyu2020GradientDM}
\nocite{Gunasekar2018CharacterizingIB}
%\nocite{Moroshko2020ImplicitBI}
%\nocite{Lyu2020GradientDM}
and nonlinear neural networks %Vardi and Shamir
~\cite{Vardi2021ImplicitRI}, etc.%,Vardi2021ImplicitRI}
%\nocite{Oymak2019}
%\nocite{Chizat2020ImplicitBO}
%\nocite{Vardi2021ImplicitRI}

In this paper, we aim to characterize the implicit regularization of MGD~\cite{POLYAK19641}, which is one of the most popular optimization algorithms in practice because of its ability to accelerate learning, especially for the cases of high curvature, small but consistent gradients, or noisy gradients. Many other variants and improvements of MGD have been developed in~\cite{Lin2020AcceleratedOF,Even2021ACV},
\nocite{Oviedo2021}
and their convergence behaviors have been analyzed. It has been empirically observed that MGD and its variants (eg, Nesterov) perform well in deep learning. However, there is a lack of theoretical discussions to uncover how MGD affects generalization performance, which is our consideration in the present paper.

An important way to explore the implicit regularization of optimization is to compare the optimization paths with the explicit regularization paths.
Recently, %a series of work has paid close attention to the implicit regularization by connecting the optimization paths to the regularization paths.
~\cite{Suggala2018,Ali2019}, in a continuous-time view, showed how the optimization path of GD is (point-wise) closely connected to an explicit $\ell_2$ regularization. In a similar idea,~\cite{Ali2020TheIR} studied the implicit regularization of stochastic gradient descent (SGD). Furthermore,~\cite{Zou2021TheBO} showed that the generalization performance of SGD is always no worse than that of ridge regression in a wide range of overparameterized problems. More results can be found in~\cite{Barrett2021ImplicitGR,Steinerberger2021OnTR}
\nocite{Smith2021StochasticGD}.
In the present paper, we study the implicit regularization of MGD by comparing its path to the path of ridge.
%The above results focused on GD and SGD.
%MGD has not been involved although it is more popular in practice.
%
%However, as of yet, the comparison between implicit regularization of the optimization paths
%and the explicit regularization of ridge paths has mainly focused on GD and SGD. And MGD has not been involved although it is more popular in practaice.
%
 %the other more powerful optimization algorithms seem not to have been involved.
Note that MGD is a second-order iteration essentially, so the corresponding continuous-time form MGF is a second-order differential equation.
We find the analytical solution of MGF by the singular value
decomposition of the data matrix.
But the solution is involved and
it is more challenging to
analysis its asymptotic behavior.
%proved that MGF is sufficiently close to MGD,
%and quantify the relationship between MGF and ridge regression are the key steps and challenges in our work.
%In this paper, we  study the implicit regularization of MGD  by comparing its path with ridge regression.
%study how the generalization performance of MGD compares with that of ridge regression. To our best knowledge, this was proposed for the first time.
The main contributions of this paper are as follows:
\begin{itemize}
\item We give the analytical continuous-time form of MGD, called MGF (a second order differential equation),
 and prove that MGD (point-wise) convergences to MGF as the step size $\epsilon\rightarrow 0$.
\item We find that MGF can be expressed as solutions to a sequence of $\ell_2$
regularized least squares problems, and then set the calibration of early stopping  $t=\sqrt{2/\lambda}$
by Taylor expansion.
\item We show that the risk of MGF at time $t$ is no more than
1.54 times that of ridge regression at tuning parameter $\lambda=2/t^2$.
%The same result also holds for in-sample prediction risk and out-of-sample Bayes prediction risk.
And the ratio of the Bayes risk of  MGF to that of ridge
 %the more tightened relative can be obtained which
 is between 1 and 1.035 under the optimal tuning.
%\item For Bayes risk of the estimation, in-sample prediction, and
%out-of-sample prediction,
%the more tightened connections with ridge can be obtained. We prove that the relative Bayes risk of
%MGF to ridge is between 1 and 1.035 under the optimal tuning.
%\item For Bayes risk, the more tightened connections of the estimation, in-sample prediction,
%out-of-sample prediction and ridge can be obtained. We prove that the relative Bayes risk of
%MGF to ridge is between 1 and 1.035 under the optimal tuning.
\item We analyse the limiting behaviors of the risk of MGF
in the Marchenko-Pastur asymptotic model, where $p/n$ (the ratio of the feature dimension to sample size)
converges to a positive constant.
\item We carry out the numerical experiments to verify the coupling between MGF and ridge.
The results show that they have the extremely similar paths,
which confirm the implicit regularization of MGD.
\end{itemize}

\section{The Continuous-time Forms of MGD and Ridge}\label{section2}

\subsection{Momentum Gradient Flow}\label{section21}
Let $X\in \mathbb{R}^{n\times p}$, a column full-rank matrix, is the data matrix and $y\in \mathbb{R}^n$ is the response vector.
We would like to analyse the learning by minimizing the loss functions of the following form,
\begin{equation}\label{1.1}
\min\limits_{\beta\in \mathbb{R}^p}~~L(f(X;\beta),y),
\end{equation}
where $L$ is the loss function, $\beta\in \mathbb{R}^p$ is the weight vector and $f(X;\beta)$ is the predicted output when
the input is $X$. We are particularly interested in the implicit regularization of
MGD applied to (\ref{1.1}).
In the standard form, we have the MGD iterations
\begin{eqnarray}
&&\tilde{v}_{k+1}=\tilde{\mu}\tilde{v}_k-\tilde{\epsilon}g(\beta_k), \nonumber\\
&&\beta_{k+1}=\beta_k+\tilde{v}_{k+1},\nonumber
\end{eqnarray}
where $g(\beta_k)=\nabla_\beta L(f(X;\beta),y)$; $\tilde{\epsilon}>0$ is the step size;
$v$ is momentum  which is set to an exponentially decaying average of the negative gradient;
and $\tilde{\mu} \in (0,1)$ (typically close to 1) is the momentum parameter that determines how
quickly the contributions of previous gradients exponentially decay.
To facilitate the following analysis, we consider a rescaled version of the general MGD.
By  redefining
$\epsilon=\sqrt{\tilde{\epsilon}}, v_k=\frac{\tilde{v}_k}{\sqrt{\tilde{\epsilon}}}$ and $\mu=\frac{1-\tilde{\mu}}{\sqrt{\tilde{\epsilon}}}$,
we have %to obtain
\begin{align}
\begin{split}\label{1.2}
&v_{k+1}=v_k-\mu\epsilon v_k-\epsilon g(\beta_k), \\
&\beta_{k+1}=\beta_k+\epsilon v_{k+1}.
\end{split}
\end{align}
After rescaling, we have the momentum parameter $\mu \in (0,\epsilon^{-1/2})$ from $\tilde{\mu}\in(0,1)$. It follows that
\begin{equation}
\beta_{k+1}=\beta_k+\epsilon v_k-\mu\epsilon^2v_k-\epsilon^2g(\beta_k). \nonumber
\end{equation}
Moreover, let $v_k=\frac{\beta_{k}-\beta_{k-1}}{\epsilon}$, then we have
%Formally, we just expect to retain the information of $\beta$.
%To eliminate the information of $v$ in the iterations, we consider $v_k=\frac{\beta_{k}-\beta_{k-1}}{\epsilon}$ to get the desired form
\begin{equation}\label{1.3}
\beta_{k+1}=2\beta_{k}-\beta_{k-1}-\mu\epsilon(\beta_{k}-\beta_{k-1})-\epsilon^2g(\beta_k).
\end{equation}
(\ref{1.3}) shows that MGD is a second-order iteration essentially.
And note that the initial values  $v_0$ and $\beta_0$  in (\ref{1.2}) can produce $\beta_0$ and $\beta_1$ for (\ref{1.3}) to iterate, and vice versa.
Thus (\ref{1.2}) and (\ref{1.3}) are equivalent.
%Note that (\ref{1.3}) requires calculating the values of $\beta_{k-1}$ and $\beta_{k}$ to generate $\beta_{k+1}$.
%However, when the initial values $v_0$ and $\beta_0$ are given, $\beta_1$ can be obtained by (\ref{1.2}) at the first iteration, and then (\ref{1.3}) can be availably used in the subsequent %calculation.
Rearranging (\ref{1.3}) yields that
\begin{equation}
\frac{\beta_{k+1}+\beta_{k-1}-2\beta_{k}}{\epsilon^2}+\mu\frac{(\beta_{k}-\beta_{k-1})}{\epsilon}=-g(\beta_k). \nonumber
\end{equation}
Letting $\epsilon\rightarrow0$, we get the continuous-time form of MGD
\begin{equation}\label{1.4}
\beta^{\prime\prime}(t)+\mu\beta^\prime(t)=-g(\beta(t)),
\end{equation}
over time $t \geq 0$. We call (\ref{1.4}) the momentum gradient flow (MGF) which are the second-order differential equations.
In this paper, we focus on the analysis of implicit regularization of MGF through the least squares problem, which is
\begin{equation}\label{1.5}
\min\limits_{\beta\in R^p}~~L(f(X;\beta),y)=\frac{1}{2n}||y-X\beta||_2^2,
\end{equation}
and $g(\beta_k)=\frac{1}{n}X^{\rm T}X\beta_k-\frac{1}{n}X^{\rm T}y$.
To facilitate the following analysis, it is helpful to consider the singular value
decomposition of $X$. Let $X=\sqrt{n}US^{\frac{1}{2}}V^{\rm T}$ be the singular value decomposition, thus $X^{\rm T}X/n=VSV^{\rm T}$
is the eigendecomposition, where $S={\rm diag}(s_i)(i=1,\cdots,p)$ and $s_i$ are the eigenvalues of $X^{\rm T}X/n$ satisfying $s_1\geq s_2\geq \cdots s_p>0 $. We note that $X^{\rm T}X/n$ is a symmetric positive definite matrix, since $X$ has the rank $p$. %is column full-rank.
%Note that MGD usually applies a common momentum parameter in all the directions.
%In the present paper,
%%based on the singular value decomposition,
%we set the momentum parameter $\mu_i=2\sqrt{s_i}+\varepsilon$ according to the eigenvalues, where $\varepsilon$ is a sufficiently small positive constant. Let $D(\mu)={\rm diag}(\mu_i)(i=1,\cdots,p)$, where the diagonal elements satisfy $\mu_1\geq \mu_2\geq \cdots \mu_p>0$.
%are arranged in numerically decreasing order, i.e.
%$\mu_1\geq \mu_2\geq \cdots \mu_p>0$.
And then applying MGD to (\ref{1.5}) initialized at $v_0=-\frac{\epsilon X^{\rm T}y}{2n(1-\mu \epsilon)}$ and $\beta_0=0$ (which implies that
$\beta_1=\frac{1}{2n}\epsilon^2 X^{\rm T}y$ by (\ref{1.2})),
we have the iterations
\begin{align}\label{1.6}
\beta_{k+1}=&2\beta_{k}-\beta_{k-1}-\epsilon D(\mu)(\beta_{k}-\beta_{k-1})    \nonumber\\
&-\epsilon^2\left(VSV^{\rm T}\beta_k-\frac{1}{\sqrt{n}}VS^{\frac{1}{2}}U^{\rm T}y\right),
\end{align}
%\begin{equation}\label{1.6}
%\resizebox{.91\linewidth}{!}{$
%    \displaystyle
%    \beta_{k+1}=2\beta_{k}-\beta_{k-1}-D(\mu)\epsilon(\beta_{k}-\beta_{k-1})+\epsilon^2VSV^{\rm T}\beta(t)=\frac{1}{\sqrt{n}}VS^{\frac{1}{2}}U^{\rm T}y,
%$}
%\end{equation}%
where $D(\mu)={\rm diag}(\mu)$
%$\prec$ denotes the Loewner ordering on the matrices,
%(i.e., $A\prec B$ means that $B-A$ is positive definite),
and the corresponding MGF is
\begin{align}\label{1.7}
\beta^{\prime\prime}(t)+D(\mu)\beta^\prime(t)+VSV^{\rm T}\beta(t)=\frac{1}{\sqrt{n}}VS^{\frac{1}{2}}U^{\rm T}y
\end{align}
for $t\geq0$, which subjects to the initial conditions $\beta(0)=0,~\beta'(0)=0$.
%***********************
%
%Based on the singular value decomposition, we set $\mu_i=2\sqrt{s_i}+\varepsilon$, where $\varepsilon$ is a sufficiently small positive constant.
%The experimental results prove that our design is remarkable which can be fund in the supplement.
%Set $D(\mu)={\rm diag}(\mu_i)(i=1,\cdots,p)$, and the diagonal elements are arranged in numerically decreasing order, i.e.
%$\mu_1\geq \mu_2\geq \cdots \mu_p>0$.
%Now, we derive the exact solution of MGF.
%
%**********************
%
%The regularization of MGD may be adversely affected, if the common momentum parameters are set during iteration.
%It is reasonable to determine the momentum parameters according to the feature of data.
%n this paper, by matching the data matrix and combining with the exact solution of MGF, we consider the momentum parameters as $\mu_i=2\sqrt{s_i}+\varepsilon$, where $\varepsilon$ is a %sufficiently small positive constant.
%The experimental results prove that our design is remarkable which can be fund in the supplement.
%Set $D(\mu)={\rm diag}(\mu_i)(i=1,\cdots,p)$, and the diagonal elements are arranged in numerically decreasing order, i.e.
%$\mu_1\geq \mu_2\geq \cdots \mu_p>0$.
%
%***************************
%We set $\prec$ denotes the Loewner ordering on the matrices, i.e., $A\prec B$ means that $B-A$ is positive definite.
Now, we derive the exact solution of MGF.
\begin{lemma}\label{lemma1}
Fix a response $y$ and a data matrix $X$. The MGF (\ref{1.7}), subject to $\beta(0)=0, \beta'(0)=0$ and $D(\mu)\succ2S^{1/2}$
admits the exact solution
\begin{equation}\label{1.8}
\hat{\beta}^{\rm mgf}(t)=\frac{1}{\sqrt{n}}VS^{-1}\left(I-H(S,t)\right)S^{\frac{1}{2}}U^{\rm T}y
\end{equation}
where
\begin{align}
H(S,t)=&\left(2\sqrt{D(\mu)^2-4S}\right)^{-1}
\left[\left(D(\mu)+\sqrt{D(\mu)^2-4S}\right)\cdot\right.\nonumber\\
&\left.\exp{\left(\frac{1}{2}\left(-D(\mu)
+\sqrt{D(\mu)^2-4S}\right)t\right)}\right.       \nonumber\\
&\left.+\left(D(\mu)-\sqrt{D(\mu)^2-4S}\right)\cdot
\right.       \nonumber\\
&\left.\exp{\left(\frac{1}{2}\left(-D(\mu)-\sqrt{D(\mu)^2-4S}\right)t\right)}\right].\nonumber
\end{align}
\begin{proof}
The result follows from solving the second-order differential equations (\ref{1.7})-\emph{see Supplement}. %\ref{PL1}}.
\end{proof}
\end{lemma}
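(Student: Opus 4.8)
The plan is to decouple the system into the eigenbasis of $X^{\rm T}X/n$, reduce it to $p$ independent scalar second-order ODEs, solve each in closed form, and reassemble. The key structural observation is that $V$ is orthogonal while $D(\mu)=\mu I$ is a scalar matrix, so $D(\mu)$ commutes with $V$ and $V^{\rm T}$. Writing $\gamma(t)=V^{\rm T}\beta(t)$ and left-multiplying (\ref{1.7}) by $V^{\rm T}$ therefore collapses the coupled system to $\gamma''(t)+\mu\gamma'(t)+S\gamma(t)=\frac{1}{\sqrt{n}}S^{1/2}U^{\rm T}y=:b$, subject to $\gamma(0)=0$ and $\gamma'(0)=0$. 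Because $S$ is diagonal, this is nothing but $p$ uncoupled scalar equations $\gamma_i''+\mu\gamma_i'+s_i\gamma_i=b_i$.

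Next I would solve each scalar equation by the standard constant-coefficient method. The steady-state particular solution is $b_i/s_i$, and the associated characteristic equation $r^2+\mu r+s_i=0$ has roots $r_i^{\pm}=\frac12(-\mu\pm\sqrt{\mu^2-4s_i})$. Here the hypothesis $D(\mu)\succ2S^{1/2}$ enters decisively: it is exactly the condition $\mu^2>4s_i$ for every $i$, which makes the discriminant strictly positive, so the two roots are real, distinct, and negative---the overdamped regime. Consequently $\gamma_i(t)=b_i/s_i+c_1e^{r_i^{+}t}+c_2e^{r_i^{-}t}$, and imposing the two initial conditions $\gamma_i(0)=\gamma_i'(0)=0$ gives a $2\times2$ linear system for $(c_1,c_2)$ with determinant $r_i^{-}-r_i^{+}=-\sqrt{\mu^2-4s_i}$, which is nonzero precisely by the same hypothesis; solving it yields $c_1,c_2$ in closed form.

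I would then rewrite the solution as $\gamma_i(t)=(b_i/s_i)\,(1-H_i(t))$, reading off $H_i(t)$ as the $i$-th diagonal entry of $H(S,t)$. Two quick checks confirm the calibration of the constants: $H_i(0)=1$ (so that $\gamma(0)=0$) and $H_i'(0)=0$ (so that $\gamma'(0)=0$). Because every operation above was diagonal in the $V$-basis, the scalar solutions repackage through functional calculus into $\gamma(t)=S^{-1}(I-H(S,t))b$, and transforming back via $\beta=V\gamma$ gives $\hat{\beta}^{\rm mgf}(t)=\frac{1}{\sqrt{n}}VS^{-1}(I-H(S,t))S^{1/2}U^{\rm T}y$, which is (\ref{1.8}).

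Each scalar ODE is elementary, so there is no deep obstacle; the real care lies in the bookkeeping---verifying that the constants $c_1,c_2$ reassemble into precisely the compact matrix $H(S,t)$, and that the matrix square root $\sqrt{D(\mu)^2-4S}$ together with the matrix exponentials are all consistently interpreted through the shared eigenbasis $V$ (which is legitimate because $D(\mu)$, $S$, and all their functions mutually commute). It is also worth stressing that the stated closed form is tied to the overdamped hypothesis $D(\mu)\succ2S^{1/2}$: were some $s_i$ to saturate $\mu^2=4s_i$ the corresponding mode would acquire a $te^{rt}$ term (repeated root), and were $\mu^2<4s_i$ it would become oscillatory with $\cos$/$\sin$ factors, so the hypothesis is not a mere technicality but singles out the regime in which (\ref{1.8}) holds verbatim.
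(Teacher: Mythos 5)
Your proposal is correct and follows essentially the same route as the paper's (supplementary) proof: pass to the $V$-basis, solve the resulting overdamped scalar second-order ODEs via the characteristic equation, impose $\gamma_i(0)=\gamma_i'(0)=0$, and reassemble through functional calculus, with the hypothesis $D(\mu)\succ 2S^{1/2}$ used exactly where you place it (real distinct negative roots, nonzero determinant of the $2\times 2$ system). Two remarks worth recording: your sanity checks $H_i(0)=1$ and $H_i'(0)=0$ are precisely the right ones and in fact expose a sign slip in the printed formula --- the coefficient of the second exponential should be $\sqrt{D(\mu)^2-4S}-D(\mu)$ rather than $D(\mu)-\sqrt{D(\mu)^2-4S}$ (for $\mu=3$, $s=2$ the true solution is $H=2e^{-t}-e^{-2t}$, not $2e^{-t}+e^{-2t}$) --- and your commutation step relies on $D(\mu)=\mu I$ being scalar, whereas for the non-scalar diagonal $D(\mu)$ used in the paper's experiments the decoupling only goes through if the damping is understood to act in the $V$-basis.
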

Throughout this paper, $H(S,t)$ is defined as above, $\prec$ denotes the Loewner ordering on the matrices
(i.e., $A\prec B$ means that $B-A$ is positive definite),
 $\|v\|$ denotes the Euclidean norm of a vector $v$ and  $\|A\|$ denotes the spectral norm of a matrix $A$.
The following lemma shows that MGD (point-wise) convergences to MGF as the step size $\epsilon\rightarrow 0$.% convergence to zero.
% rather than the discrete-time MGD.
%Therefore, we need to analyse the error between MGD and MGF.
%The uniform bounds between MGD and MGF, like the forward Euler method and the differential equation path,
%is given by numerical analysis.
\begin{lemma}\label{lemma2}
For least squares (\ref{1.5}), consider discrete-time MGD $\{\beta_k:k=0,\cdots,n\}$ (\ref{1.6}) initialized at $v_0=-\frac{\epsilon X^{\rm T}y}{2n(1-\mu \epsilon)}$ and $\beta_0=0$, MGF $\left\{\beta(t):t\in[0,T]\right\}$ (\ref{1.8}) subjects to $\beta(0)=0, \beta^\prime(0)=0$. For $n+1=\lfloor T/\epsilon\rfloor$, it holds that
\begin{equation}
\|\hat{\beta}^{\rm mgf}(t_{k+1})-\beta_{k+1}\|\leq \epsilon CT^2\exp(s_{\rm max}T^2),
\end{equation}
where $t_{k+1}=(k+1)\epsilon$, $C$ is a positive constant and $s_{\rm max}$ is the largest eigenvalue of $X^{\rm T}X/n$.
\end{lemma}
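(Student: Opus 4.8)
The plan is to treat the discrete iteration (\ref{1.6}) as a finite-difference discretization of the second-order system (\ref{1.7}) and to prove convergence in the classical numerical-analysis fashion: bound the \emph{local truncation error} (consistency), control the \emph{propagation of errors} by the recurrence (stability), and combine the two through a discrete variation-of-constants (Green's function) argument. Throughout I write $c=\frac{1}{\sqrt n}VS^{1/2}U^{\rm T}y$ for the forcing term and $e_k=\hat{\beta}^{\rm mgf}(t_k)-\beta_k$ for the error at the grid point $t_k=k\epsilon$.

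First I would derive the recurrence satisfied by $e_k$. Inserting the exact flow $\hat{\beta}^{\rm mgf}$ into the right-hand side of (\ref{1.6}) and subtracting (\ref{1.6}) itself gives the forced linear recurrence
\[
e_{k+1}=\big(2I-\epsilon D(\mu)-\epsilon^2 VSV^{\rm T}\big)e_k+\big(\epsilon D(\mu)-I\big)e_{k-1}+\tau_k ,
\]
where $\tau_k$ is the residual left when $\hat{\beta}^{\rm mgf}$ is substituted into (\ref{1.6}). Taylor expanding $\hat{\beta}^{\rm mgf}(t_{k\pm1})$ about $t_k$, the centred second difference reproduces $\epsilon^2(\hat{\beta}^{\rm mgf})''(t_k)$ and the backward difference reproduces $\epsilon(\hat{\beta}^{\rm mgf})'(t_k)$, so the $\epsilon^2$-order part of $\tau_k$ equals $\epsilon^2\big[(\hat{\beta}^{\rm mgf})''+D(\mu)(\hat{\beta}^{\rm mgf})'+VSV^{\rm T}\hat{\beta}^{\rm mgf}-c\big]$ evaluated at $t_k$, which vanishes because $\hat{\beta}^{\rm mgf}$ solves (\ref{1.7}). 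Hence $\|\tau_k\|=O(\epsilon^3)$, with a constant governed by $\sup_t\|(\hat{\beta}^{\rm mgf})''(t)\|$ and $\sup_t\|(\hat{\beta}^{\rm mgf})'''(t)\|$, both finite and estimable from the closed form (\ref{1.8}). I would likewise verify the start-up: $e_0=0$ by construction, and since $(\hat{\beta}^{\rm mgf})''(0)=\frac1n X^{\rm T}y$ the choice $\beta_1=\frac{1}{2n}\epsilon^2 X^{\rm T}y$ matches $\hat{\beta}^{\rm mgf}(\epsilon)=\tfrac{\epsilon^2}{2}(\hat{\beta}^{\rm mgf})''(0)+O(\epsilon^3)$, so $\|e_1\|=O(\epsilon^3)$ as well.

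The decisive step is stability. Because $D(\mu)=\mu I$ commutes with $VSV^{\rm T}$, multiplying the error recurrence by $V^{\rm T}$ decouples it into $p$ scalar second-order recurrences, one per eigenvalue $s_i$, with characteristic polynomial $z^2-(2-\epsilon\mu-\epsilon^2 s_i)z+(1-\epsilon\mu)$. Writing each coordinate through its discrete Green's function $G^{(i)}_m$ expresses $V^{\rm T}e_k$ as a convolution $\sum_j G^{(i)}_{k-j}(V^{\rm T}\tau_j)_i$ plus a start-up term, so the whole problem reduces to a uniform bound on $|G^{(i)}_m|$ for $m\le N=\lfloor T/\epsilon\rfloor$. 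This is the main obstacle: the companion matrix of the recurrence is \emph{non-normal}, so its eigenvalues (the roots $z$, which under $D(\mu)\succ2S^{1/2}$ are real and lie within $1+O(\epsilon)$) do not by themselves control the norms of its powers. One must track the roots explicitly—their near-coalescence contributes a factor $\sim1/\epsilon$ to $G^{(i)}_m$—and compound the per-step amplification over the $\sim T/\epsilon$ steps, which is exactly what generates both the polynomial factor $T^2$ and the exponential factor $\exp(s_{\rm max}T^2)$ in the final estimate.

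Finally I would assemble the pieces. Summing the coordinatewise estimates $|(V^{\rm T}e_{k+1})_i|\le\sum_{j}|G^{(i)}_{k-j}|\,\|\tau_j\|$ against $\|\tau_j\|=O(\epsilon^3)$ over the $O(T/\epsilon)$ steps, invoking the uniform Green's-function bound, and recombining the coordinates through $\|e_{k+1}\|^2=\sum_i|(V^{\rm T}e_{k+1})_i|^2$, collapses everything to the stated estimate $\|\hat{\beta}^{\rm mgf}(t_{k+1})-\beta_{k+1}\|\le\epsilon CT^2\exp(s_{\rm max}T^2)$. The single factor $\epsilon^3$ of per-step consistency error, accumulated over the $\sim T/\epsilon$ grid points and amplified by the non-normal stability constant, is what produces the overall $\epsilon$-linear rate together with its polynomial and exponential dependence on $T$. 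I expect the bookkeeping of that stability constant—rather than the routine Taylor-expansion consistency estimate—to be the genuinely delicate part of the argument.
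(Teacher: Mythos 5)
Your consistency-plus-stability argument --- an $O(\epsilon^3)$ local truncation error obtained by Taylor-expanding the exact flow inside the recurrence (\ref{1.6}), matched start-up values $e_0=0$, $\|e_1\|=O(\epsilon^3)$, and a discrete Green's-function/Gronwall bound on error propagation over the $\sim T/\epsilon$ grid points --- is exactly the standard numerical-analysis route the paper itself takes (its proof is deferred to the supplement with precisely the description ``the uniform bound is given by numerical analysis''), and it does deliver the stated estimate. One cosmetic quibble: under the overdamping condition $D(\mu)\succ 2S^{1/2}$ both characteristic roots lie in the unit disk for small $\epsilon$, so the factor $\exp(s_{\rm max}T^2)$ comes from the crude worst-case compounding in the discrete Gronwall step rather than from genuine amplification (a sharper accounting would give $O(\epsilon T^2)$), but since the lemma only asserts an upper bound this costs nothing.
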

\begin{proof}
The uniform bound is given by  numerical analysis-\emph{see Supplement}.% \ref{PL2}}.
\end{proof}
By Lemma~\ref{lemma2}, we will focus on the exact solution of the continuous-time MGF to
study the implicit regularization of MGD in the following.

\subsection{Basic Comparisons Between MGF and Ridge}\label{section22}
Consider the ridge regression, the $\ell_2$ regularized version of (\ref{1.5}), that is
\begin{equation}\label{2.1}
\min\limits_{\beta\in R^p}~~\frac{1}{2n}||y-X\beta||_2^2+\lambda||\beta||_2^2,
\end{equation}
where $\lambda >0$ is a tuning parameter. The explicit ridge solution is
\begin{equation}\label{2.2}
\hat{\beta}^{\rm ridge}(\lambda)=(X^{\rm T}X+n\lambda I)^{-1}X^{\rm T}y.
\end{equation}
To compare the paths of ridge (\ref{2.2}) and MGF (\ref{1.8}), it is helpful to rearrange them, on the scale of fitted values, to
\begin{align}\label{2.3}
X\hat{\beta}^{\rm ridge}(\lambda)&=US^{\frac{1}{2}}V^{\rm T}V(S+\lambda I)^{-1}S^{\frac{1}{2}}U^{\rm T}y  \nonumber\\
&= US(S+\lambda I)^{-1}U^{\rm T}y.\\
\label{2.4}
X\hat{\beta}^{\rm mgf}(t)&=
US^{\frac{1}{2}}V^{\rm T}VS^{-1}\left(I-H(S,t)\right)S^{\frac{1}{2}}U^{\rm T}y   \nonumber\\
&=U\left(I-H(S,t)\right)U^{\rm T}y.
\end{align}
Letting $u_i\in \mathbb{R}^n, i=1,\cdots, p$ denote the columns of $U$,
we see that (\ref{2.3}), (\ref{2.4}) are both linear smoothers (linear functions of $y$) of the form
$\sum_{i=1}^{p}\varphi(s_i,\kappa)\cdot u_iu_i^{\rm T}y$,
for a spectral shrinkage map $\varphi(\cdot,\kappa):[0,\infty)\rightarrow[0,\infty)$ and parameter $\kappa$.
This map is $\varphi^{\rm ridge}(s,\lambda)=s/(s+\lambda)$ for ridge, and $\varphi^{\rm mgf}(s,t)=1-H(s,t)$ for MGF.
We see that both apply more shrinkage for smaller values of $s$, i.e., lower-variance directions of $X^{\rm T}X/n$,
but do so in apparently different ways.
And the two shrinkage maps agree at the extreme ends (i.e., set $\lambda\rightarrow0$ and $t\rightarrow\infty$, $\varphi(s,\cdot)\rightarrow 1$, or $\lambda\rightarrow\infty$ and $t\rightarrow0$, $\varphi(s,\cdot)\rightarrow0$).
We note that the parametrization $\lambda=2/t^2$ (the calibration setting is obtained by
Taylor expansion and will be explained in Section \ref{section23})
gives the two shrinkage maps similar behaviors: see
Figure \ref{figure1} for a visualization. Moreover, as we will show later in Sections \ref{section4}-\ref{section6}, the two shrinkage maps
(under the calibration $\lambda=2/t^2$) lead to similar risk curves for MGF and ridge.
\begin{figure}[!h]
\centering
\includegraphics[width=9cm, height=4cm]{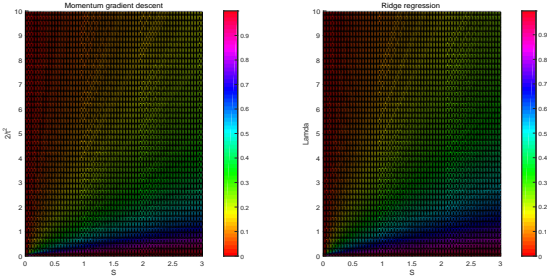}
\caption{\footnotesize{Comparison of MGF and ridge spectral shrinkage maps.}} \label{figure1}
\end{figure}

\subsection{Underlying Regularization Problems}\label{section23}
 We are interested in the connection between MGF and ridge. It is natural to wonder whether MGF can be expressed as solutions to sequences of regularized least squares. The following lemma confirms this fact.
%, and its proof can be found in the supplement.
\begin{lemma}\label{lemma3}
Fix $y$ and $X$. Under the initial conditions $\beta(0)=0, \beta'(0)=0$, for $t\geq0$, MGF (\ref{1.8}) uniquely solves the optimization problem
\begin{equation}\label{2.6}
\min\limits_{\beta\in R^p}~~\frac{1}{2n}\left\|y-X\beta\right\|_2^2+\beta^{\rm T}Q_t\beta,
\end{equation}
where $Q_t=VS\left(H(S,t)^{-1}-I\right)^{-1}V^{\rm T}$.
\end{lemma}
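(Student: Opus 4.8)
The plan is to recognize (\ref{2.6}) as a strictly convex quadratic program—a generalized (Tikhonov) ridge problem whose penalty matrix is $Q_t$—so that it admits a unique minimizer characterized by the vanishing of its gradient, and then to verify by direct substitution that the MGF solution $\hat\beta^{\rm mgf}(t)$ of (\ref{1.8}) is exactly that stationary point. The entire argument is carried out in the eigenbasis $V$ of $X^{\rm T}X/n$, in which $S$, $H(S,t)$ and $Q_t$ are simultaneously diagonal, so that every matrix identity decouples into a scalar identity in each eigendirection $s_i$.

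First I would settle existence and uniqueness. The Hessian of the quadratic objective in (\ref{2.6}) equals $VSV^{\rm T}$ plus a positive multiple of $Q_t$, and since $X$ has full column rank we have $VSV^{\rm T}\succ0$; it therefore suffices to check $Q_t\succeq0$. Writing $Q_t = VS(H(S,t)^{-1}-I)^{-1}V^{\rm T}$ and invoking the spectral-shrinkage description of Section \ref{section22}, where $\varphi^{\rm mgf}(s,t)=1-H(s,t)\in(0,1)$ for $t>0$, we obtain $0\prec H(S,t)\prec I$, hence $H(S,t)^{-1}-I\succ0$ and $(H(S,t)^{-1}-I)^{-1}\succ0$; multiplying by the positive diagonal matrix $S$, which commutes with $H(S,t)$, yields $Q_t\succ0$. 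The objective is thus strictly convex and has a unique global minimizer, determined by setting its gradient to zero.

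Next I would write the first-order optimality condition. In exact analogy with the ridge normal equation $(\tfrac1n X^{\rm T}X+\lambda I)\hat\beta^{\rm ridge}=\tfrac1n X^{\rm T}y$ underlying (\ref{2.2}), stationarity for (\ref{2.6}) reads $(\tfrac1n X^{\rm T}X + Q_t)\hat\beta = \tfrac1n X^{\rm T}y$. Substituting $\tfrac1n X^{\rm T}X = VSV^{\rm T}$, $\tfrac1n X^{\rm T}y=\tfrac1{\sqrt n}VS^{1/2}U^{\rm T}y$ and the candidate $\hat\beta^{\rm mgf}(t)=\tfrac1{\sqrt n}VS^{-1}(I-H(S,t))S^{1/2}U^{\rm T}y$, the verification reduces, after using $V^{\rm T}V=I$, to the diagonal matrix identity $(S+N)S^{-1}(I-H(S,t))=I$, where $N=S(H(S,t)^{-1}-I)^{-1}$ is $Q_t$ expressed in the basis $V$. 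This in turn rests on the elementary algebraic identity $(H(S,t)^{-1}-I)^{-1}=(I-H(S,t))^{-1}H(S,t)$—valid because $H(S,t)$ commutes with everything in sight—which gives $S+N=S(I-H(S,t))^{-1}$ and hence the claimed identity. Read the other way, this computation is precisely the statement that $Q_t$ is the unique penalty for which the generalized-ridge solution coincides with MGF, which is how the formula $Q_t=VS(H(S,t)^{-1}-I)^{-1}V^{\rm T}$ is found.

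The main obstacle is twofold, though neither part is deep once the simultaneous diagonalization is exploited. The first and more substantive is the positivity step: one must guarantee $H(S,t)\prec I$ strictly for every $t>0$, so that $Q_t$ is both well-defined (the inverse $(H(S,t)^{-1}-I)^{-1}$ exists) and positive definite; this is exactly where the overdamped hypothesis $D(\mu)\succ2S^{1/2}$ of Lemma \ref{lemma1} and the monotone decay of the diagonal entries of $H(S,t)$ are used, and the degenerate endpoint $t=0$—where $H(S,0)=I$, $Q_t$ blows up, and the penalty forces $\hat\beta=0=\hat\beta^{\rm mgf}(0)$—must be handled separately as a limiting case. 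The second obstacle is merely organizational: propagating the cumbersome closed form of $H(S,t)$ through the algebra without slips, which is harmless precisely because $H(S,t)$ commutes with $S$ and the whole identity is scalar in each eigendirection.
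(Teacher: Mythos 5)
Your proposal is correct and follows essentially the same route as the paper: write down the normal equation of the generalized ridge problem (\ref{2.6}), match its closed-form solution against (\ref{1.8}) in the eigenbasis $V$ where everything is diagonal, and read off $Q_t=VS(H(S,t)^{-1}-I)^{-1}V^{\rm T}$ via the identity $(H^{-1}-I)^{-1}=(I-H)^{-1}H$. Note only that your stationarity condition $(\tfrac1n X^{\rm T}X+Q_t)\hat\beta=\tfrac1n X^{\rm T}y$ omits the factor $2$ from $\nabla_\beta(\beta^{\rm T}Q_t\beta)$, but this deliberately mirrors the paper's own convention in (\ref{2.1})--(\ref{2.2}), so it is consistent with the statement being proved; your flagged obligations ($0\prec H(S,t)\prec I$ for $t>0$ under $D(\mu)\succ 2S^{1/2}$, and the degenerate case $t=0$) are exactly the right ones and do hold.
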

\begin{proof}
The result readily follows from MGF (\ref{1.8}) and the solution of the optimization problem-\emph{see Supplement}.% \ref{PL3}}.
\end{proof}
\begin{remark}\label{remark1}
Computing the first two orders  of the Taylor’s Series of $H(S,t)^{-1}$ at the point $t=0$, we have
\begin{equation}
H(S,t)^{-1}=I+H'(S,t)^{-1}t+\frac{1}{2}H''(S,t)^{-1}t^2=I+\frac{1}{2}t^2S.    \nonumber
\end{equation}
An application of the claim of Lemma \ref{lemma3} can give the expression of regularization parameter
\begin{equation}
Q_t=VS(H(S,t)^{-1}-I)^{-1}V^{\rm T}=VS(\frac{1}{2}t^2S)^{-1}V^{\rm T}=\frac{2}{t^2}I.   \nonumber
\end{equation}
It shows that MGF is extremely close to ridge, under the calibration $\lambda=2/t^2$.
\end{remark}

\section{Measures of Risk}\label{section3}
\subsection{Estimation Risk}
For any feature matrix $X\in \mathbb{R}^{n\times p}$, %to be fixed and arbitrary, and
we consider a generic response model,
\begin{equation}\label{3.1}
y\mid\beta_0\sim(X\beta_0, \sigma^2I),
\end{equation}
i.e., $E(y\mid\beta_0)=X\beta_0, {\rm Cov}(y\mid\beta_0)=\sigma^2I$ for the underlying coefficient vector $\beta_0\in \mathbb{R}^p$
and the error variance $\sigma^2>0$.
For an estimator $\hat{\beta}$ (i.e., measurable function of $X, y$), we define its estimation risk (or simply, risk) as
\begin{equation}\label{3.2}
{\rm {Risk}}(\hat{\beta}; \beta_0)=E[\| \hat{\beta}-\beta_0\|^2_2\mid\beta_0].
\end{equation}
We consider a spherical prior,
\begin{equation}\label{3.3}
\beta_0\sim(0,\frac{r^2}{p}I),
\end{equation}
for some signal strength $r^2=E\|\beta_0\|^2_2>0$, and define the Bayes risk of an estimator $\hat{\beta}$ as
\begin{equation}\label{3.4}
{\rm {Risk}}(\hat{\beta})=E\| \hat{\beta}-\beta_0\|^2_2.
\end{equation}
Next we give expressions for the risk and Bayes risk of MGF.
\begin{lemma}\label{lemma4}
Under the data model (\ref{3.1}), for $t\geq0$, the risk of the MGF (\ref{1.8}) is
\begin{align}\label{3.5}
{\rm Risk}&(\hat{\beta}^{\rm mgf}(t); \beta_0)=    \nonumber\\
&\sum_{i=1}^{p}\left[|\beta_0^{\rm T}v_i|^2H^2(s_i,t)+\frac{\sigma^2}{n}\frac{(1-H(s_i,t))^2}{s_i}\right],
\end{align}
and under the prior (\ref{3.3}), the Bayes risk is
\begin{align}\label{3.6}
{\rm Risk}&(\hat{\beta}^{\rm mgf}(t))=   \nonumber\\
&\frac{\sigma^2}{n}\sum_{i=1}^{p}\left[\alpha H^2(s_i,t)+\frac{(1-H(s_i,t))^2}{s_i}\right],
\end{align}
where $\alpha=r^2n/(\sigma^2p)$.
\end{lemma}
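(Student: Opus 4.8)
The plan is to treat $\hat{\beta}^{\rm mgf}(t)$ as a linear estimator $\hat\beta=My$ with the matrix $M=\frac{1}{\sqrt n}VS^{-1}(I-H(S,t))S^{1/2}U^{\rm T}$ (fixed once $X$ is given), and to exploit the standard bias--variance decomposition of the risk for linear smoothers. Under the model (\ref{3.1}) one has $E[\hat\beta\mid\beta_0]=MX\beta_0$ and ${\rm Cov}(\hat\beta\mid\beta_0)=\sigma^2 MM^{\rm T}$, so that ${\rm Risk}(\hat\beta;\beta_0)=\|MX\beta_0-\beta_0\|_2^2+\sigma^2\,{\rm tr}(MM^{\rm T})$. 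The whole argument then reduces to evaluating these two terms using the singular value decomposition $X=\sqrt n\,US^{1/2}V^{\rm T}$.

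For the bias, I would substitute the SVD into $MX$ and use the orthogonality relations $U^{\rm T}U=I$ and $V^{\rm T}V=VV^{\rm T}=I$ together with the fact that $S$ and $H(S,t)$ are diagonal and hence commute; this collapses $MX$ to $V(I-H(S,t))V^{\rm T}$, whence $MX\beta_0-\beta_0=-V H(S,t) V^{\rm T}\beta_0$ and $\|MX\beta_0-\beta_0\|_2^2=\beta_0^{\rm T} V H^2(S,t) V^{\rm T}\beta_0=\sum_{i=1}^p |\beta_0^{\rm T} v_i|^2 H^2(s_i,t)$, which is the first term of (\ref{3.5}). For the variance, the same commutation and the cyclicity of the trace give $MM^{\rm T}=\frac1n V S^{-1}(I-H(S,t))^2 V^{\rm T}$, so that $\sigma^2\,{\rm tr}(MM^{\rm T})=\frac{\sigma^2}{n}\sum_{i=1}^p \frac{(1-H(s_i,t))^2}{s_i}$, the second term; adding the two pieces completes (\ref{3.5}).

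Finally, I would obtain the Bayes risk (\ref{3.6}) by averaging (\ref{3.5}) over the prior (\ref{3.3}). Only the bias term depends on $\beta_0$, and since $E[\beta_0\beta_0^{\rm T}]=\frac{r^2}{p}I$ and each $v_i$ is a unit vector, $E|\beta_0^{\rm T} v_i|^2=v_i^{\rm T}\frac{r^2}{p}I\,v_i=r^2/p$. Substituting this and factoring out $\sigma^2/n$ introduces the coefficient $\alpha=r^2 n/(\sigma^2 p)$ and yields (\ref{3.6}). I do not expect a genuine obstacle here: the content is a routine bias--variance computation, and the only point demanding care is the bookkeeping of the SVD orthogonality relations---in particular that $U\in\mathbb{R}^{n\times p}$ satisfies only $U^{\rm T}U=I$ (and not $UU^{\rm T}=I$), whereas $V$ is a genuine $p\times p$ orthogonal matrix---together with the repeated use of the commutativity of the diagonal matrices $S$, $S^{1/2}$ and $H(S,t)$, which is what makes all the cross terms simplify cleanly.
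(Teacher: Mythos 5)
Your proposal is correct and follows essentially the same route as the paper, whose (supplementary) proof likewise applies the bias--variance decomposition to the linear smoother form of $\hat{\beta}^{\rm mgf}(t)$ and reduces everything to the SVD identities $U^{\rm T}U=I$, $V^{\rm T}V=VV^{\rm T}=I$ and the commutativity of the diagonal matrices $S$ and $H(S,t)$. The computations of the bias term $\beta_0^{\rm T}VH^2(S,t)V^{\rm T}\beta_0$, the variance term $\frac{\sigma^2}{n}{\rm tr}\bigl(VS^{-1}(I-H(S,t))^2V^{\rm T}\bigr)$, and the prior average $E|\beta_0^{\rm T}v_i|^2=r^2/p$ are all accurate and yield (\ref{3.5}) and (\ref{3.6}) exactly.
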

\begin{proof}
The results follow from the definitions of risk, Bayes risk and bias-variance decomposition-\emph{see Supplement}.% \ref{PL4}}.
\end{proof}
\begin{remark}\label{remark2}
Compare (\ref{3.5}) to the risk of ridge regression,
\begin{align}\label{3.7}
{\rm Risk}&(\hat{\beta}^{\rm ridge}(\lambda); \beta_0)=    \nonumber\\
&\sum_{i=1}^{p}\left[|\beta_0^{\rm T}v_i|^2\frac{\lambda^2}{(s_i+\lambda)^2}+\frac{\sigma^2}{n}\frac{s_i}{(s_i+\lambda)^2}\right],
\end{align}
and compare (\ref{3.6}) to the Bayes risk of ridge,
\begin{equation}\label{3.8}
{\rm Risk}(\hat{\beta}^{\rm ridge}(\lambda))=\frac{\sigma^2}{n}\sum_{i=1}^{p}\left[\frac{\alpha\lambda^2+s_i}{(s_i+\lambda)^2}\right],
\end{equation}
where $\alpha=r^2n/(\sigma^2p)$. These ridge results follow from standard calculations, which can be found in many other papers; for
completeness, we give the details in Supplement.%-see \ref{PR2}.
\end{remark}

\subsection{Prediction Risk}
In this section, we analyse the prediction risk. Let
\begin{equation}\label{3.9}
x_0 \sim (0, \Sigma)
\end{equation}
for a positive semidefinite matrix $\Sigma\in \mathbb{R}^{p\times p}$, and assume that $x_0$ is independent of $y\mid\beta_0$. We define in-sample
prediction risk and out-of-sample prediction risk
%(simply, prediction risk)
as
\begin{align}
\label{3.10}&{\rm Risk^{in}(\hat{\beta};\beta_0)}=\frac{1}{n}E[\|X\hat{\beta}-X\beta_0\|^2_2\mid\beta_0],\\
\label{3.11}&{\rm Risk^{out}(\hat{\beta};\beta_0)}=E[(x_0^{\rm T}\hat{\beta}-x_0^{\rm T}\beta_0)^2\mid\beta_0],
\end{align}
respectively, and their Bayes versions as
$
{\rm Risk^{in}(\hat{\beta})}=\frac{1}{n}E[\|X\hat{\beta}-X\beta_0\|^2_2],
{\rm Risk^{out}(\hat{\beta};\beta_0)}=E[(x_0^{\rm T}\hat{\beta}-x_0^{\rm T}\beta_0)^2],
$
respectively.
Now, we give the expressions for the prediction risk and Bayes prediction risk of MGF.
%In what follows, we will focus on the out-of-sample prediction risk
%and defer detailed discussion of in-sample prediction risk to the supplement-\emph{see \ref{In-sample-mgf}} and \emph{\ref{In-sample-ridge}}.
\begin{lemma}\label{lemma5}
Under (\ref{3.1}) and (\ref{3.9}), the out-of-sample prediction risk of MGF (\ref{1.8}) is
\begin{align}\label{3.12}
 {\rm Risk^{out}}(\hat{\beta}^{\rm mgf}(t);&\beta_0)= \beta_0^{\rm T}VH(S,t)V^{\rm T}\Sigma VH(S,t)V^{\rm T}\beta_0 \nonumber\\
&+\frac{\sigma^2}{n}{\rm tr}\left[S^{-1}(I-H(S,t))^2\Sigma\right],
\end{align}
and under (\ref{3.3}), the Bayes out-of-sample prediction risk is
\begin{align}\label{3.13}
{\rm Risk^{out}}&(\hat{\beta}^{\rm mgf}(t))=   \nonumber\\
&\frac{\sigma^2}{n}{\rm tr}\left[\alpha H^2(S,t)\Sigma+S^{-1}(I-H(S,t))^2\Sigma\right].
\end{align}
\end{lemma}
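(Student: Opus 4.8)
The plan is to reduce the out-of-sample prediction risk to a $\Sigma$-weighted estimation risk and then apply a bias-variance decomposition, which the closed form (\ref{1.8}) of $\hat{\beta}^{\rm mgf}(t)$ makes entirely explicit. First I would use that $x_0\sim(0,\Sigma)$ is independent of $y\mid\beta_0$. Writing $\delta=\hat{\beta}^{\rm mgf}(t)-\beta_0$, which depends on $y$ alone, I condition on $y$ and integrate out $x_0$ to get $E_{x_0}[(x_0^{\rm T}\delta)^2]=\delta^{\rm T}E[x_0x_0^{\rm T}]\delta=\delta^{\rm T}\Sigma\delta$; taking the remaining expectation over $y$ then gives ${\rm Risk^{out}}(\hat{\beta}^{\rm mgf}(t);\beta_0)=E[\delta^{\rm T}\Sigma\delta\mid\beta_0]$, the $\Sigma$-weighted mean squared error.

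Next I would split this as $E[\delta^{\rm T}\Sigma\delta\mid\beta_0]=b^{\rm T}\Sigma b+{\rm tr}(\Sigma\,{\rm Cov}(\hat{\beta}^{\rm mgf}(t)))$, where $b=E[\hat{\beta}^{\rm mgf}(t)\mid\beta_0]-\beta_0$ is the bias. Both moments follow from the linearity of the estimator in $y$: writing $\hat{\beta}^{\rm mgf}(t)=My$ with $M=\tfrac{1}{\sqrt{n}}VS^{-1}(I-H(S,t))S^{1/2}U^{\rm T}$ and using $X=\sqrt{n}US^{1/2}V^{\rm T}$ with $E(y\mid\beta_0)=X\beta_0$, the relations $U^{\rm T}U=I$ and $VV^{\rm T}=I$ collapse $MX$ to $V(I-H(S,t))V^{\rm T}$. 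Hence $E[\hat{\beta}^{\rm mgf}(t)\mid\beta_0]=V(I-H(S,t))V^{\rm T}\beta_0$ and $b=-VH(S,t)V^{\rm T}\beta_0$. For the variance, ${\rm Cov}(\hat{\beta}^{\rm mgf}(t))=\sigma^2MM^{\rm T}$, and using $U^{\rm T}U=I$ together with the fact that $S$ and $H(S,t)$ are diagonal (hence commute) gives $MM^{\rm T}=\tfrac{1}{n}VS^{-1}(I-H(S,t))^2V^{\rm T}$.

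Substituting these into the decomposition produces the bias term $b^{\rm T}\Sigma b=\beta_0^{\rm T}VH(S,t)V^{\rm T}\Sigma VH(S,t)V^{\rm T}\beta_0$ and the variance term $\tfrac{\sigma^2}{n}{\rm tr}[\Sigma\,VS^{-1}(I-H(S,t))^2V^{\rm T}]$, which by the cyclic invariance of the trace matches the variance term displayed in (\ref{3.12}). For the Bayes version I would then average over $\beta_0\sim(0,\tfrac{r^2}{p}I)$. The variance term is free of $\beta_0$, so it is unchanged, while the bias term becomes ${\rm tr}(E[\beta_0\beta_0^{\rm T}]\,VH(S,t)V^{\rm T}\Sigma VH(S,t)V^{\rm T})=\tfrac{r^2}{p}{\rm tr}(VH^2(S,t)V^{\rm T}\Sigma)$ after inserting $E[\beta_0\beta_0^{\rm T}]=\tfrac{r^2}{p}I$ and using cyclicity with $V^{\rm T}V=I$. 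Substituting $\alpha=r^2n/(\sigma^2p)$ rewrites the bias contribution as $\tfrac{\sigma^2}{n}\alpha\,{\rm tr}(VH^2(S,t)V^{\rm T}\Sigma)$, and adding the ($\beta_0$-independent) variance term yields the stated Bayes risk (\ref{3.13}).

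The argument is essentially a bookkeeping computation, so the only point demanding care is keeping the SVD factors straight: since $U\in\mathbb{R}^{n\times p}$ is rectangular one must use $U^{\rm T}U=I$ and never $UU^{\rm T}=I$, and the simplifications of $MX$ and $MM^{\rm T}$ rest on the commutativity of the diagonal matrices $S$ and $H(S,t)$. I expect the main (mild) obstacle to be carrying out the trace manipulations so that the $\Sigma$-weighted terms reduce to the compact forms stated in (\ref{3.12})-(\ref{3.13}), rather than any conceptual difficulty.
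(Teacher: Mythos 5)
Your proposal is correct and follows essentially the same route as the paper: reduce the out-of-sample risk to the $\Sigma$-weighted mean squared error $E[\delta^{\rm T}\Sigma\delta\mid\beta_0]$ by integrating out $x_0$, then apply the bias--variance decomposition to the linear smoother $\hat{\beta}^{\rm mgf}(t)=My$ and average over the prior for the Bayes version. The only caveat is notational: your variance term is $\tfrac{\sigma^2}{n}{\rm tr}\bigl[VS^{-1}(I-H(S,t))^2V^{\rm T}\Sigma\bigr]$, which equals the displayed ${\rm tr}\bigl[S^{-1}(I-H(S,t))^2\Sigma\bigr]$ only after identifying $\Sigma$ with $V^{\rm T}\Sigma V$ --- a convention the paper itself adopts in (\ref{3.12})--(\ref{3.15}), so this does not reflect a gap in your argument.
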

\begin{proof}
The results follow from the definitions of the out-of-sample prediction risk, Bayes out-of-sample prediction risk and bias-variance decomposition-
\emph{see Supplement}.% \ref{PL5}}.
\end{proof}
\begin{remark}\label{remark3}
Similar to (\ref{3.12}) and (\ref{3.13}), we have the out-of-sample prediction risk and  Bayes out-of-sample prediction risk of ridge
\begin{align}\label{3.14}
{\rm Risk^{out}}&(\hat{\beta}^{\rm ridge}(\lambda); \beta_0)=\nonumber\\
&\lambda^2\beta_0^{\rm T}V(S+\lambda I)^{-1}V^{\rm T}\Sigma V(S+
\lambda I)^{-1}V^{\rm T}\beta_0\nonumber\\
&+\frac{\sigma^2}{n}{\rm tr}[S(S+\lambda I)^{-2}\Sigma],\\
\label{3.15}
{\rm Risk^{out}}&(\hat{\beta}^{\rm ridge}(\lambda))=    \nonumber\\
&\frac{\sigma^2}{n}{\rm tr}\left[\lambda^2\alpha(S+\lambda I)^{-2}\Sigma+ S(S+\lambda I)^{-2}\Sigma\right],
\end{align}
respectively. More details can be found in Supplement. %\ref{PR3}.
\end{remark}
%In what follows, we will focus on the out-of-sample prediction risk
%and defer detailed discussion of in-sample prediction risk to the supplement-\emph{see \ref{In-sample-mgf}} and \emph{\ref{In-sample-ridge}}.
%For space reasons, we defer detailed discussion of in-sample prediction risk to the supplement-\emph{see \ref{In-sample-mgf}} and \emph{\ref{In-sample-ridge}}.

\begin{remark}
%We note that the in-sample prediction risk is given by the same formulae of out-of-sample prediction risk except with $\Sigma$ replaced by $X^{\rm T}X/n=VSV^{\rm T}$.
The results of the in-sample prediction risk of MGF can be expressed as
\begin{align}\label{3.16}
&{\rm Risk^{in}}(\hat{\beta}^{\rm mgf}(t);\beta_0)=   \nonumber\\
&\sum_{i=1}^p\left[|\beta_0^{\rm T}v_i|^2s_iH^2(s_i,t)+\frac{\sigma^2}{n}(1-H(s_i,t))^2\right],
\end{align}
and the Bayse prediction in-sample risk can be expressed as
%\begin{equation}
%\resizebox{.91\linewidth}{!}{$
%    \displaystyle
%   {\rm Risk^{in}}(\hat{\beta}^{\rm mgf}(t))=
%\frac{\sigma^2}{n}\sum_{i=1}^p\left[\alpha s_iH^2(s_i,t)+(1-H(s_i,t))^2\right].\nonumber
%$}
%\end{equation}%
\begin{align}\label{3.17}
{\rm Risk^{in}}&(\hat{\beta}^{\rm mgf}(t))=    \nonumber\\
&\frac{\sigma^2}{n}\sum_{i=1}^p\left[\alpha s_iH^2(s_i,t)+(1-H(s_i,t))^2\right].
\end{align}
Similarly, we can give the ridge results,
\begin{align}\label{3.18}
 {\rm Risk^{in}}&(\hat{\beta}^{\rm ridge}(\lambda); \beta_0)= \nonumber\\
&\sum_{i=1}^{p}\left[|\beta_0^{\rm T}v_i|^2\frac{\lambda^2s_i}{(s_i+\lambda)^2}+\frac{\sigma^2}{n}\frac{s_i^2}{(s_i+\lambda)^2}\right],\\
\label{3.19}
{\rm Risk^{in}}&(\hat{\beta}^{\rm ridge}(\lambda))
=\frac{\sigma^2}{n}\sum_{i=1}^p\left[\frac{\alpha\lambda^2s_i+s_i^2}{(s_i+\lambda)^2}\right].
\end{align}
The proof can be found in the Supplement.%-\emph{see \ref{In-sample-mgf}}.% and \emph{\ref{In-sample-ridge}}
\end{remark}

\section{Relative Risk Bounds} \label{section4}
\subsection{Relative Estimation Risk and Prediction Risk}
In this section, we study the bound on the relative risk of MGF to ridge, under the calibration $\lambda=2/t^2$.
Firstly, we need to introduce two critical inequalities.
\begin{lemma}\label{lemma6}
For $t\geq0$ and $s_i>0$, we have
$\emph{(\rmnum{1})}~H(s_i,t)< 1.24\frac{1}{1+s_it^2/2}; \emph{(\rmnum{2})}~1-H(s_i,t)< 1.04\frac{s_it^2/2}{1+s_it^2/2}$.
\end{lemma}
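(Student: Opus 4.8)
The plan is to collapse both statements to a one--variable calculus problem. Because $H(S,t)$ is diagonalized by $V$, it suffices to prove the two scalar inequalities for a single fixed eigenvalue $s:=s_i>0$. For a single mode, \eqref{1.7} reads $\beta''+\mu\beta'+s\beta=c$, and the momentum calibration underlying the statement is critical damping $\mu=2\sqrt{s}$: the characteristic equation then has the double root $-\sqrt{s}$, so that $H(s,t)=(1+\sqrt{s}\,t)\,e^{-\sqrt{s}\,t}$. I would then pass to the dimensionless variable $x:=\sqrt{s}\,t\ge 0$, under which $st^2/2=x^2/2$ and every quantity in Lemma~\ref{lemma6} becomes a function of $x$ alone. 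Reducing the general overdamped form of $H$ from Lemma~\ref{lemma1} (governed by the hypothesis $D(\mu)\succ 2S^{1/2}$) to this one--parameter benchmark is a preliminary normalization I treat separately.

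The crucial remark is that both inequalities are controlled by the single function
\[
f(x):=(1+x)\Bigl(1+\tfrac{x^2}{2}\Bigr)e^{-x},\qquad x\ge 0.
\]
Writing $P(x)=(1+x)(1+x^2/2)=1+x+\tfrac{x^2}{2}+\tfrac{x^3}{2}$ and $f=P\,e^{-x}$, a direct differentiation yields the clean identity $f'(x)=\bigl(P'(x)-P(x)\bigr)e^{-x}=x^2\bigl(1-\tfrac{x}{2}\bigr)e^{-x}$, which is the engine of both parts.

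For part~(\rmnum{1}): since $H(s,t)\bigl(1+st^2/2\bigr)=f(x)$, the claim is exactly $f(x)<1.24$. From $f'(x)=x^2(1-x/2)e^{-x}$ the function $f$ increases on $(0,2)$ and decreases on $(2,\infty)$, so its global maximum is $f(2)=P(2)e^{-2}=9\,e^{-2}\approx 1.218<1.24$, closing part~(\rmnum{1}). For part~(\rmnum{2}): clearing denominators turns the claim into $f(x)>1-0.02\,x^2$, equivalently $g(x):=f(x)-1+0.02\,x^2>0$ for $x>0$. Here $g(0)=0$ and $g'(x)=x\bigl[(x-\tfrac{x^2}{2})e^{-x}+0.04\bigr]$; the bracket is positive on $(0,2)$, while on $(2,\infty)$ the term $(x-x^2/2)e^{-x}$ has a single interior minimum at $x=2+\sqrt{2}$ of value $\approx-0.079<-0.04$, so $g$ has exactly one interior local minimum $x_\ast\approx 5.5$. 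Verifying $g(x_\ast)>0$ establishes part~(\rmnum{2}).

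The main obstacle is part~(\rmnum{2}), which is numerically tight: $\sup_{x>0}(1-f(x))/x^2\approx 0.019$ sits only just below the threshold $0.02$, so the interior minimizer $x_\ast$ must be pinned down and $g(x_\ast)>0$ certified rigorously rather than by inspection. I would trap $x_\ast$ in a short interval using the monotonicity of $(x-x^2/2)e^{-x}$ past $x=2+\sqrt{2}$ and then bound $f$ from below there by elementary exponential estimates. A secondary, more conceptual obstacle is justifying the reduction to the benchmark form $H(s,t)=(1+\sqrt{s}\,t)e^{-\sqrt{s}\,t}$: controlling the momentum parameter $\mu$ is precisely what makes the constants $1.24$ and $1.04$ the right ones.
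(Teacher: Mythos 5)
Your core computations are correct, and you take a genuinely different route from the paper: the paper's own argument for Lemma~\ref{lemma6} is a numerical verification deferred to the supplement, whereas you give an analytic one-variable proof. The reduction to $x=\sqrt{s}\,t$, the identity $f'(x)=x^2(1-x/2)e^{-x}$, the global maximum $f(2)=9e^{-2}\approx 1.218<1.24$ for part (i), and the reformulation of part (ii) as $f(x)>1-0.02x^2$ all check out; the interior minimum of $g$ sits near $x\approx 5.5$ with value $\approx 0.033>0$ (equivalently $\sup_{x>0}(1-f(x))/x^2\approx 0.0189<0.02$), so the remaining certification is a routine, if somewhat tight, interval argument. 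What your approach buys is an actual proof together with an explanation of where the constants come from: $1.24$ is a rounding of $9e^{-2}$ and $1.04$ sits just above the true supremum near $0.019$; the paper's numerical check offers neither.

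The one genuine issue is the reduction to critical damping, which you flag but do not resolve --- and it is as much a defect of the lemma's statement as of your proposal. Lemma~\ref{lemma1} assumes strict overdamping $D(\mu)\succ 2S^{1/2}$, under which your $H(s,t)=(1+\sqrt{s}\,t)e^{-\sqrt{s}\,t}$ is only the limiting case; moreover the inequalities cannot hold uniformly over all admissible $\mu$: the slow characteristic rate is $2s/\bigl(\mu+\sqrt{\mu^2-4s}\bigr)$, which tends to $0$ as $\mu\to\infty$, so $H(s,t)\to 1$ pointwise and part (i) fails whenever $s t^2>0.48$. The lemma is therefore implicitly about $\mu$ at or near $2\sqrt{s}$ --- exactly the choice $\mu_i=2\sqrt{s_i}+10^{-3}$ used in the paper's experiments. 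To close your proof you should either state the critical-damping hypothesis explicitly and obtain $H$ as the $\sqrt{\mu^2-4s}\to 0$ limit of the overdamped formula in Lemma~\ref{lemma1} (note that formula as printed has a sign error in the coefficient of the second exponential, since it gives $H(s,0)=\mu/\sqrt{\mu^2-4s}\neq 1$), or add a continuity-in-$\mu$ argument covering a small overdamped neighborhood; the margins $1.24-9e^{-2}$ and $0.02-0.019$ leave room for the latter.
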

\begin{proof} The results follow from the numerically computing-\emph{see Supplement}.% \ref{PL6}}.
%maximizing the functions $f_1(s_i,t)=H(s_i,t)(1+s_it^2/2)$ and
 %$f_2(s_i,t)=(1-H(s_i,t))(1+s_it^2/2)/(s_it^2/2)$-see supplement.
\end{proof}
%\begin{proof} (i) and (ii) can be shown by numerically maximizing the functions $f_1(s_i,t)=H(s_i,t)(1+s_it^2/2)$ and
% $f_2(s_i,t)=(1-H(s_i,t))(1+s_it^2/2)/(s_it^2/2)$. See Figure 2 for an illustration.
% \end{proof}
%\begin{figure}[!h]
%\centering
%\includegraphics[width=9cm, height=4cm]{lemma6.eps}
%\caption{\footnotesize{ We traversed the different values of $s_i$, and plotted the change of the performances of
%$f_1(s_i,t)$ and $f_2(s_i,t)$ over time $t$. Left: the maximum of $f_1(s_i,t)$ does not exceed 1.24 and tends to 0 as
%$t\rightarrow\infty$. Right: the maximum of $f_2(s_i,t)$ does not exceed 1.04 and tends to 1 as $t\rightarrow\infty$.}}
%\end{figure}
The following theorem gives the bounds of the relative risk of MGF to ridge.
\begin{theorem}\label{theorem1}
Consider the data model (\ref{3.1}).  For all $\beta_0\in \mathbb{R}^p$ and $t\geq0$, we have
\begin{equation}\label{bound}
{\rm Risk}(\hat{\beta}^{\rm mgf}(t);\beta_0)<1.5376\cdot {\rm Risk}(\hat{\beta}^{\rm ridge}(2/t^2);\beta_0).
\end{equation}
Moreover, (\ref{bound}) also holds if we
replace  the risk by the Bayes risk  for any prior (\ref{3.3}), the in-sample prediction risk, the Bayes in-sample prediction risk for any prior (\ref{3.3}),
 or the Bayes out-of-sample prediction risk for
any prior (\ref{3.3}) and the feature distribution (\ref{3.9}).
%
% it also holds for the Bayes risk for any prior on $\beta_0$ .\\
%Moreover,
%${\rm Risk}(\hat{\beta}^{\rm mgf}(t);\beta_0)<1.5376 {\rm Risk}(\hat{\beta}^{\rm ridge}(2/t^2);\beta_0)$, and it holds for the Bayes risk with respect to any prior on $\beta_0$.
%\emph{(\rmnum{2})} For any prior on $\beta_0$, it holds that
%\begin{equation}
%{\rm Risk}(\hat{\beta}^{\rm mgf}(t))<1.5376 {\rm Risk}(\hat{\beta}^{\rm ridge}(2/t^2)). \nonumber
%\end{equation}
%\emph{(\rmnum{2})} For all $\beta_0\in \mathbb{R}^p$ and $t>0$, it holds that
%\begin{equation}
%{\rm Risk}^{\rm in}(\hat{\beta}^{\rm mgf}(t);\beta_0)<1.5376 {\rm Risk}^{\rm in}(\hat{\beta}^{\rm ridge}(2/t^2);\beta_0). \nonumber
%\end{equation}
%and also holds for the Bayes risk for any prior on $\beta_0$\\
%\emph{(\rmnum{3})} For the prior (\ref{3.3}) and feature distribution (\ref{3.9}), it holds that
% \begin{equation}
%{\rm Risk}^{\rm out}(\hat{\beta}^{\rm mgf}(t))<1.5376 {\rm Risk}^{\rm out}(\hat{\beta}^{\rm ridge}(2/t^2)). \nonumber
%\end{equation}
%\emph{(\rmnum{2})} The inequality (\ref{bound}) holds for the Bayes risk with respect to any prior on $\beta_0$.\\
%\emph{(\rmnum{3})} The inequality (\ref{bound}) holds for in-sample prediction risk. \\
%\emph{(\rmnum{4})} The inequality (\ref{bound}) holds for out-of-sample Bayes prediction risk under prior (\ref{3.3}), and feature distribution (\ref{3.9}).
\end{theorem}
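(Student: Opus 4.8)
The plan is to reduce each claimed inequality to a termwise comparison of spectral summands, with the only real input being the two scalar bounds of Lemma~\ref{lemma6}. First I would rewrite those bounds under the calibration $\lambda=2/t^2$: since then $s_it^2/2=s_i/\lambda$, part (\rmnum{1}) reads $H(s_i,t)<1.24\,\lambda/(s_i+\lambda)$ and part (\rmnum{2}) reads $1-H(s_i,t)<1.04\,s_i/(s_i+\lambda)$. Squaring (all quantities are nonnegative) yields the two estimates
\begin{align}
H^2(s_i,t)&<1.5376\,\frac{\lambda^2}{(s_i+\lambda)^2}, \nonumber\\
\bigl(1-H(s_i,t)\bigr)^2&<1.0816\,\frac{s_i^2}{(s_i+\lambda)^2}, \nonumber
\end{align}
where $1.5376=1.24^2$ and $1.0816=1.04^2$; these are the workhorses of the whole proof.

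For the estimation risk I would compare (\ref{3.5}) with (\ref{3.7}) summand by summand. The first estimate bounds the MGF bias term $|\beta_0^{\rm T}v_i|^2H^2(s_i,t)$ by $1.5376$ times the ridge bias $|\beta_0^{\rm T}v_i|^2\lambda^2/(s_i+\lambda)^2$; the second estimate (after dividing by $s_i$) bounds the MGF variance term $(\sigma^2/n)(1-H(s_i,t))^2/s_i$ by $1.0816$ times the ridge variance $(\sigma^2/n)s_i/(s_i+\lambda)^2$. Since every term is nonnegative and $\max\{1.5376,1.0816\}=1.5376$, summing over $i$ gives (\ref{bound}). The Bayes risk (\ref{3.6}) versus (\ref{3.8}), the in-sample risk (\ref{3.16}) versus (\ref{3.18}), and the Bayes in-sample risk (\ref{3.17}) versus (\ref{3.19}) go through identically: in each the bias summand carries an $H^2$ factor governed by $1.5376$ and the variance summand carries a $(1-H)^2$ factor governed by $1.0816$, while the accompanying weights ($s_i$ for in-sample, $\alpha$ for Bayes) match on both sides.

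The one case needing care is the Bayes out-of-sample risk (\ref{3.13}) versus (\ref{3.15}), because the underlying bias (\ref{3.12}) is a quadratic form $\beta_0^{\rm T}VH(S,t)V^{\rm T}\Sigma VH(S,t)V^{\rm T}\beta_0$ whose off-diagonal cross terms resist a direct termwise comparison. Here I would exploit the Bayes averaging: writing $M=V^{\rm T}\Sigma V$, the isotropic prior (\ref{3.3}) gives $E[V^{\rm T}\beta_0\beta_0^{\rm T}V]=(r^2/p)I$, so the bias collapses to ${\rm tr}[H^2(S,t)M]=\sum_i H^2(s_i,t)M_{ii}$. Because $M$ is positive semidefinite, its diagonal entries $M_{ii}$ are nonnegative, so the scalar bound on $H^2$ applies weightwise and produces $1.5376\,{\rm tr}[\lambda^2(S+\lambda I)^{-2}M]$; the variance trace ${\rm tr}[S^{-1}(I-H(S,t))^2M]=\sum_i(1-H(s_i,t))^2M_{ii}/s_i$ is likewise bounded by $1.0816\,{\rm tr}[S(S+\lambda I)^{-2}M]$, and combining them with the same maximum finishes this case.

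I expect this last step to be the main obstacle, and it is precisely why the theorem does \emph{not} claim the bound for the non-Bayes out-of-sample risk (\ref{3.12}): without prior averaging the bias remains a genuine quadratic form in $V^{\rm T}\beta_0$, and the cross terms $H(s_i,t)M_{ij}H(s_j,t)$ with $i\neq j$ cannot be controlled by the diagonal scalar inequalities of Lemma~\ref{lemma6}. Once the squared forms of Lemma~\ref{lemma6} are in place, the four diagonal cases are routine bookkeeping, and the out-of-sample case reduces to the same comparison via the trace identity above.
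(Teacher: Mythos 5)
Your proposal is correct and follows essentially the same route as the paper: a termwise comparison of the spectral summands using the two bounds of Lemma~\ref{lemma6} (squared, with the calibration $\lambda=2/t^2$), with the Bayes out-of-sample case handled through the positive semidefiniteness of $\Sigma$. Your explicit trace-and-diagonal argument via $M=V^{\rm T}\Sigma V$ is in fact a slightly more careful rendering of the paper's step that multiplies the diagonal Loewner inequality by $\Sigma$ and takes the trace.
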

\begin{proof}
For the risk, set $\lambda=2/t^2$ and denote
%compare
the $i$th summand of (\ref{3.5}) and (\ref{3.7}) by $a_i$ and $b_i$, respectively.
%to that in (\ref{3.7}), call it $b_i$.
Then we have
\begin{align}
a_i&=|v_i^{\rm T}\beta_0|^2H^2(s_it)+\frac{\sigma^2}{n}\frac{[1-H(s_i,t)]^2}{s_i}  \nonumber\\
&<|v_i^{\rm T}\beta_0|^2 1.24^2\frac{1}{(1+s_it^2/2)^2}+\frac{\sigma^2}{n} 1.04^2\frac{s_i(t^2/2)^2}{(1+s_it^2/2)^2}\nonumber\\
&< 1.5376\cdot\left[|v_i^{\rm T}\beta_0|^2 \frac{(2/t^2)^2}{(s_i+2/t^2)^2}+\frac{\sigma^2}{n} \frac{s_i}{(s_i+2/t^2)^2}\right]\nonumber\\
&=1.5376\cdot b_i.  \nonumber
\end{align}
The first inequality follows from Lemma \ref{lemma6}. %Summing over $i = 1, . . . , p$ gives the desired result.
Then the bound of the risk follows by summing over $i = 1, . . . , p$.

We have the bound of the Bayes risk just by taking expectations on each side of (\ref{bound}).
%And the Bayes result follows just by taking expectations on each side of (\ref{bound}).

%the inequality (\ref{bound}).\\
%(\rmnum{3})
For the in-sample prediction risk, we can get the bound by multiplying $s_i$ to each summand in (\ref{bound}).
By taking expectations for the in-sample prediction risk, we have
%we get the in-sample prediction risk.
the bound of the Bayes in-sample prediction risk.% follows by taking expectations on both side of .

%(\rmnum{4})
Since $H(S, t)$ and $S$ are diagonal matrices,
the two inequalities in Lemma \ref{lemma6} can be  extended to matrix operations, i.e.
$H^2(S,t)< 1.5376(I+St^2/2)^{-2}; \left(I-H(S,t)\right)^2< 1.0816(S^2t^4/4)(I+St^2/2)^{-2}$.
%which follows from Lemma \ref{lemma6}.
Note that $\Sigma\succeq0$ in (\ref{3.13}) and (\ref{3.15}).
Then for the  Bayes out-of-sample prediction risk,
%we consider the matrices inside the traces in (\ref{3.13}) and (\ref{3.15}). By $\Sigma\succeq0$,
we have
%reduced to $H(s_i, t)$ and $s_i$, the entries on the diagonals.
 %ones about $H(s_i, t)$ and $s_i$, i.e. the two inequalities in Lemma \ref{lemma6} can be extend to matrix operations, i.e. $H^2(S,t)< 1.5376(I+St^2/2)^{-2}; \left(I-H(S,t)\right)^2< 1.0816(S^2t^4/4)(I+St^2/2)^{-2}$ which follows by squaring the results of Lemma \ref{lemma6}.
 %Consider the matrices inside the traces in (\ref{3.13}) and (\ref{3.15}). Since $\Sigma\succeq0$, we have
\begin{align}
&\alpha H^2(S,t)\Sigma+S^{-1}(I-H(S,t))^2\Sigma \nonumber\\
&=\left(\alpha H^2(S,t)+S^{-1}(I-H(S,t))^2\right)\Sigma     \nonumber\\
%&<\left(1.5376\alpha(I+St^2/2)^{-2}+1.0816(St^4/4)(I+St^2/2)^{-2}\right)\Sigma   \nonumber\\
%&<1.5376\left(\alpha(I+St^2/2)^{-2}+(St^4/4)(I+St^2/2)^{-2}\right)\Sigma   \nonumber\\
&<1.5376\cdot\left[\alpha(2/t^2)^2(2/t^2I+S)^{-2}+S(2/t^2I+S)^{-2}\right]\Sigma.  \nonumber
\end{align}
%Comparing (\rmnum{1}), we notice that the only difference in the in-sample prediction
%risk expressions is that each summand has been multiplied by $s_i$.
%Therefore
%the exact same relative bounds in (\rmnum{1}) apply termwise, i.e.,
%the arguments for Part (\rmnum{1}) can apply here.
\end{proof}

\subsection{Relative Risks at the Optima}
Note that the Bayes risk (\ref{3.8}), the Bayes prediction risk (\ref{3.13}) and (\ref{3.19}) of ridge are minimized at $\lambda^*=1/\alpha$ ~\cite{Dicker2016Ridge}. In the special case that the distributions of $y\mid\beta_0$ and the prior $\beta_0$ are normal, we know that $\hat{\beta}^{\rm ridge}(\lambda^*)$ is the Bayes estimator, which achieves the optimal Bayes risk (hence certainly the lowest Bayes risk over the whole ridge family). So the Bayes risk of $\hat{\beta}^{\rm mgf}(t)$, for $t \geq 0$, must be at least that of $\hat{\beta}^{\rm ridge}(\lambda^*)$.
Applying the fact that $\lambda=2/t^2$ and $\lambda^*=1/\alpha$, we can set the optima time $t=\sqrt{2\alpha}$ for the MGF.
The following inequality is a key step to obtain
%In what follows, one helpful inequality be introduced to obtaining the results,
the relative Bayes risk and the Bayes prediction risk of MGF to ridge, when both are optimally tuned.
\begin{lemma}\label{lemma7}
 For all $s_i>0$ and $\alpha>0$, it holds that
\begin{equation}
\alpha H^2\left(s_i,\sqrt{2\alpha}\right)+\frac{\left[1-H(s_i,\sqrt{2\alpha})\right]^2}{s_i} < 1.035 \frac{1}{\alpha(1+s_i)}.  \nonumber
\end{equation}
\end{lemma}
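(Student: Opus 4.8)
The plan is to read both sides of the claimed inequality as per-eigenvalue contributions to \emph{Bayes} risks and then collapse the two free parameters $s_i,\alpha$ into a single dimensionless variable. First I would substitute the calibration $t=\sqrt{2\alpha}$, so the argument of $H$ becomes $H(s_i,\sqrt{2\alpha})$, and identify the left-hand side as the $i$-th summand of the MGF Bayes risk (\ref{3.6}). The right-hand side I would read as $1.035$ times the $i$-th summand of the optimally tuned ridge Bayes risk (\ref{3.8}); since the remark preceding the lemma fixes the optimal ridge tuning at $\lambda^{*}=1/\alpha$, a one-line computation shows that summand is $\alpha/(1+\alpha s_i)$. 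Dividing the whole inequality by $\alpha$ and introducing $u:=\alpha s_i\,(=s_i t^2/2)$ turns the target into
\[
H^2+\frac{(1-H)^2}{u}<\frac{1.035}{1+u},
\]
where, by the closed form of $H$ in Lemma~\ref{lemma1} under the calibration $t=\sqrt{2\alpha}$ (and consistently with Lemma~\ref{lemma6}, whose bounds are themselves functions of $s_i t^2/2=u$), the quantity $H=H(s_i,\sqrt{2\alpha})$ depends on $s_i$ and $\alpha$ only through $u$. Thus the two-parameter claim becomes a single-variable inequality.

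Next I would package this as $\Psi(u):=(1+u)\bigl[H^2+(1-H)^2/u\bigr]<1.035$ for $u>0$. A clean organizing principle is that if $H$ were \emph{exactly} the ridge profile $1/(1+u)$ then $\Psi\equiv 1$ identically; hence $\Psi(u)-1$ measures precisely the discrepancy between the true MGF shrinkage and the ideal ridge shrinkage, and the content of the lemma is that this discrepancy never exceeds $0.035$. I would then check the two boundary regimes explicitly: as $u\to 0$ one has $H\to 1$ and $(1-H)^2/u\to 0$, while as $u\to\infty$ one has $H\to 0$ and $(1+u)(1-H)^2/u\to 1$; in both limits $\Psi(u)\to 1$. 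So $\Psi$ equals $1$ at both ends and its supremum is attained at an interior point, which is consistent with the Bayes-optimality argument preceding the lemma: MGF can never beat the optimally tuned ridge, so the ratio is automatically $\ge 1$, and only the upper control is at issue.

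The hard part is the interior bound $\sup_{u>0}\Psi(u)<1.035$. Because $H$ is built from exponentials in $\sqrt{u}$ (its closed form in Lemma~\ref{lemma1}), the stationarity equation $\Psi'(u)=0$ is transcendental and admits no closed-form root, so I would not chase an exact maximizer. Instead I would first localize the maximum to a bounded interval of $u$ by a sign analysis of $\Psi'$ (the two tails already give $\Psi\to 1$, so the maximizer lives in a compact middle band), and then control $\Psi$ on that band by a monotonicity/convexity estimate or by interval arithmetic — exactly the ``numerically computing'' route used for Lemma~\ref{lemma6}. I expect this step to be the genuine obstacle because the bound is nearly tight: the true maximum of $\Psi$ sits close to $1.03$, leaving almost no slack, so the bounding scheme must be quantitatively sharp rather than merely qualitative, and the crude route of inserting the loose constants $1.24,1.04$ from Lemma~\ref{lemma6} is visibly too weak near $u=0$ (it overestimates $H^2\approx 1$ by the factor $1.24^2$). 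Once $\Psi(u)<1.035$ is secured, multiplying back by $\alpha$ and undoing $u=\alpha s_i$ recovers the stated inequality for every $s_i>0$ and $\alpha>0$.
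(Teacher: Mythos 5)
Your route is essentially the paper's: the authors also establish this inequality purely by numerical verification (deferred to the supplement), and your reduction to the single\-/variable bound $\sup_{u>0}\Psi(u)<1.035$ with $u=\alpha s_i=s_it^2/2$, followed by the tail limits $\Psi(u)\to1$ and interval arithmetic on the remaining compact band, is a sensible way to organize exactly that computation. Your observation that the constants $1.24$ and $1.04$ of Lemma~\ref{lemma6} are too lossy near $u=0$ is also correct --- that is precisely why the paper needs a separate Lemma~\ref{lemma7} here rather than reusing Lemma~\ref{lemma6}, and your estimate that the true maximum of $\Psi$ sits near $1.03$ matches what the $1.035$ constant is designed to cover.

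Two points deserve explicit mention. First, you are not proving the statement as literally printed: you replace the right\-/hand side $1.035/(\alpha(1+s_i))$ by $1.035\,\alpha/(1+\alpha s_i)$, on the grounds that the latter is the $i$-th summand of the optimally tuned ridge Bayes risk, $b_i=\frac{\alpha(\lambda^*)^2+s_i}{(s_i+\lambda^*)^2}=\frac{\alpha}{1+\alpha s_i}$ at $\lambda^*=1/\alpha$. You are right to do so: the printed right\-/hand side is not $b_i$, and the inequality as printed actually fails (fix $s_i$ and let $\alpha\to\infty$; the left side tends to $1/s_i>0$ while $1.035/(\alpha(1+s_i))\to0$). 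Your version is the one the proof of Theorem~\ref{theorem2} actually needs, so this is a repair of the statement rather than a gap in your argument, but the substitution should be stated rather than made silently. Second, the collapse of the two parameters into the single variable $u$ requires that $H(s,t)$ depend on $(s,t)$ only through $\sqrt{s}\,t$; from the closed form in Lemma~\ref{lemma1} this holds only when $\mu_i$ is proportional to $\sqrt{s_i}$ (e.g.\ the near\-/critically\-/damped choice $\mu_i=2\sqrt{s_i}+10^{-3}$ used in the experiments). The paper makes the same tacit assumption in Lemma~\ref{lemma6}, but for a general $D(\mu)\succ2S^{1/2}$ the quantity $\Psi$ carries an extra free damping parameter, and a one\-/dimensional numerical sweep is then not sufficient; you should either fix that regime explicitly or extend the verification over the admissible range of $\mu_i/\sqrt{s_i}$.
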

\begin{proof}
The result follows from the numerically computing-\emph{see Supplement}.% \ref{PL7}}.
%Similar to Lemma 6, we apply the numerical computation to get the maximum of the function $f_3(s_i,\alpha)=\alpha(1+s_i)\left(\alpha H^2(s_i,\sqrt{2\alpha})+(1-H(s_i,\sqrt{2\alpha}))^2 /s_i\right)$-see supplement.
\end{proof}
%As shown in Figure 3, $f_3(s_i,\alpha)$ does not exceed 1.035.
%\begin{figure}[!h]
%\centering
%\includegraphics[width=5cm, height=4cm]{lemma7.eps}
%\caption{\footnotesize{ We traversed the different values of $\alpha$, and plotted the change of the performances of
%$f_3(s_i,\sqrt{2\alpha})$. The maximum of $f_3(s_i,\sqrt{2\alpha})$ does not exceed 1.035 and tends to 1 as $t\rightarrow\infty$.}}
%\end{figure}
\begin{theorem}\label{theorem2}
Consider the data model (\ref{3.1}), the prior (\ref{3.3}) and the (out-of-sample) feature distribution (\ref{3.9}).\\
%\emph{(\rmnum{1})}
It holds that
\begin{equation}\label{bound2}
1\leq \frac{\inf_{t>0}{\rm Risk(\hat{\beta}^{\rm mgf}(t))}}{\inf_{\lambda>0}{\rm Risk}(\hat{\beta}^{\rm ridge}(\lambda))} <1.035.
\end{equation}
Moreover, (\ref{bound2}) also holds if we
replace  the Bayes risk by the Bayes in-sample prediction risk, or the Bayes out-of-sample prediction risk.
%\emph{(\rmnum{2})} The results in \emph{(\rmnum{1})} holds for in-sample prediction risk.\\
%\emph{(\rmnum{3})} The result in \emph{(\rmnum{1})} also holds for out-of-sample prediction risk.
\end{theorem}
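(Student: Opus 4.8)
The plan is to prove the two inequalities of (\ref{bound2}) separately, reducing each to a single scalar comparison between the $i$th summand of the MGF Bayes risk (\ref{3.6}) and of the ridge Bayes risk (\ref{3.8}), and then to transport that comparison to the in-sample and out-of-sample prediction risks by reweighting the modes with nonnegative coefficients. Throughout I abbreviate the $i$th summand of (\ref{3.6}) (without the factor $\sigma^2/n$) by $m_i(t)=\alpha H^2(s_i,t)+(1-H(s_i,t))^2/s_i$, and I recall from the discussion preceding Lemma \ref{lemma7} that the optimal ridge tuning is $\lambda^*=1/\alpha$, which under the calibration $\lambda=2/t^2$ corresponds to the MGF time $t=\sqrt{2\alpha}$.

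For the lower bound I would view $m_i(t)$ as the value $g(h)=\alpha h^2+(1-h)^2/s_i$ of a convex quadratic at $h=H(s_i,t)$. Its unconstrained minimizer is $h^*=1/(1+\alpha s_i)$, at which $g(h^*)=1/(s_i+1/\alpha)$, and this is exactly the $i$th summand of the ridge Bayes risk (\ref{3.8}) at $\lambda^*=1/\alpha$. Hence $m_i(t)\ge 1/(s_i+1/\alpha)$ for every $t\ge 0$ and every $i$, so after summing and taking the infimum over $t$ we obtain $\inf_{t>0}{\rm Risk}(\hat{\beta}^{\rm mgf}(t))\ge {\rm Risk}(\hat{\beta}^{\rm ridge}(\lambda^*))=\inf_{\lambda>0}{\rm Risk}(\hat{\beta}^{\rm ridge}(\lambda))$, i.e. the ratio is at least $1$. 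Conceptually this is the statement that in the Gaussian instance of the model (\ref{3.1})--(\ref{3.3}) the estimator $\hat{\beta}^{\rm ridge}(\lambda^*)$ is Bayes optimal; since the Bayes risk depends only on the first two moments, the bound transfers to the general moment model.

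For the upper bound I would dominate the infimum by the single value at the calibrated time, $\inf_{t>0}{\rm Risk}(\hat{\beta}^{\rm mgf}(t))\le {\rm Risk}(\hat{\beta}^{\rm mgf}(\sqrt{2\alpha}))$, and then invoke Lemma \ref{lemma7}, which bounds each $m_i(\sqrt{2\alpha})$ by $1.035$ times the $i$th summand $1/(s_i+1/\alpha)$ of the optimally tuned ridge Bayes risk. Summing over $i$ gives ${\rm Risk}(\hat{\beta}^{\rm mgf}(\sqrt{2\alpha}))<1.035\,{\rm Risk}(\hat{\beta}^{\rm ridge}(\lambda^*))=1.035\inf_{\lambda>0}{\rm Risk}(\hat{\beta}^{\rm ridge}(\lambda))$, which is the desired strict upper bound. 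Combining the two displays yields (\ref{bound2}).

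To extend both bounds to the prediction risks I would exploit that the per-mode structure persists with nonnegative weights. The in-sample summands in (\ref{3.17}) and (\ref{3.19}) are exactly $s_i$ times the estimation summands, so multiplying the two mode-wise inequalities by $s_i>0$ and summing reproduces both bounds, once one checks (as for the estimation risk) that the ridge in-sample Bayes risk is also minimized at $\lambda^*=1/\alpha$. For the out-of-sample risk I would rewrite (\ref{3.13}) as $\frac{\sigma^2}{n}{\rm tr}\big[(\alpha H^2(S,t)+S^{-1}(I-H(S,t))^2)\Sigma\big]$ and pass to the basis $V$, turning the trace into $\sum_i m_i(t)(V^{\rm T}\Sigma V)_{ii}$ with weights $(V^{\rm T}\Sigma V)_{ii}\ge 0$ because $\Sigma\succeq 0$; the same two mode-wise inequalities, together with the term-by-term minimization of (\ref{3.15}) at $\lambda^*=1/\alpha$, then close the comparison. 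The main obstacle is not this aggregation but the sharp scalar estimate of Lemma \ref{lemma7}: pinning down the constant $1.035$ for $m_i(\sqrt{2\alpha})$ uniformly in $s_i>0$, $\alpha>0$ (and over the admissible momentum $D(\mu)\succ 2S^{1/2}$) requires controlling the two-exponential form of $H(s_i,\sqrt{2\alpha})$, which I expect to be delegated to a numerical optimization in the Supplement. The only genuinely new point beyond Theorem \ref{theorem1} is the out-of-sample case, where $\Sigma$ need not share eigenvectors with $X^{\rm T}X/n$; the diagonalization in $V$ above is precisely what reduces it back to the scalar inequality.
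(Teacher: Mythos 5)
Your proof is correct, and the upper bound, the in-sample extension, and the out-of-sample diagonalization all match the paper's argument (calibrate at $t=\sqrt{2\alpha}$, apply Lemma \ref{lemma7} summand by summand, multiply by the nonnegative weights $s_i$ or $(V^{\rm T}\Sigma V)_{ii}$). Where you genuinely diverge is the lower bound. The paper argues it by decision theory: in the normal--normal case $\hat{\beta}^{\rm ridge}(1/\alpha)$ is the Bayes estimator, hence no estimator in the MGF family can beat it, and since the Bayes risks depend only on first and second moments the conclusion transfers to the general moment model (\ref{3.1})--(\ref{3.3}). You instead observe that the $i$th summand is the convex quadratic $g(h)=\alpha h^2+(1-h)^2/s_i$ evaluated at $h=H(s_i,t)$, whose unconstrained minimum $g(1/(1+\alpha s_i))=1/(s_i+1/\alpha)$ is exactly the $i$th ridge summand at $\lambda^*=1/\alpha$; this gives the termwise inequality $m_i(t)\ge b_i(\lambda^*)$ directly, without any appeal to Gaussianity or to the ``first two moments'' transfer step. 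Your route is more elementary and arguably more robust: the termwise inequality immediately survives reweighting by $s_i$ or by $(V^{\rm T}\Sigma V)_{ii}\ge 0$, so the lower bound for the in-sample and out-of-sample prediction risks comes for free, whereas the paper's Bayes-optimality argument would need to be re-justified for each prediction loss. The one bookkeeping point you correctly flag --- that each ridge summand (and hence each weighted sum) is minimized at the common value $\lambda^*=1/\alpha$, so the infimum over $\lambda$ is attained termwise --- is needed in both approaches and is covered by the paper's citation of the ridge optimal-tuning result. As in the paper, the quantitative content of the upper bound still rests entirely on the numerically verified scalar estimate of Lemma \ref{lemma7}.
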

\begin{proof}
%(\rmnum{1})
Note that in the special case of a normal-normal likelihood-prior pair,
the minimum of the Bayes risk of $\hat{\beta}^{\rm mgf}(t)$ is not less than % must be at least
that of $\hat{\beta}^{\rm ridge}(\lambda^*)$.
But the Bayes risks of MGF (\ref{3.6}) and ridge (\ref{3.8}) do not depend on the likelihood and the prior
%of  $y\mid\beta_0$ and the prior(\ref{3.3})
(only on their first two moments), thus we  prove the lower bound must be hold in general.
 %Bayes risk of $\hat{\beta}^{\rm mgf}(t)$, for $t \geq 0$, must be at least that of $\hat{\beta}^{\rm ridge}(\lambda^*)$.
%
%Because the Bayes risks of MGF (\ref{3.6}) and ridge (\ref{3.8}) do not depend on the distributions of $y\mid\beta_0$ and the prior (\ref{3.3}) (only on their first two moments), we know that the lower bound must be hold in general. which proves the lower bound on the risk ratio.
%
For the upper bound, set $t=\sqrt{2\alpha}$, and denote the $i$th summand in (\ref{3.6}) and  in (\ref{3.8}) by $a_i$ and $b_i$, respectively.
By Lemma \ref{lemma7}, we have %\\[2mm]
%$~~~~~~~~~~~~~~~~~a_i=\alpha H^2\left(s_i,\sqrt{2\alpha}\right)+\left[1-H(s_i,\sqrt{2\alpha})\right]^2/s_i$\\[2mm]
%$ ~~~~~~~~~~~~~~~~~~~~~<1.035\left[1/\alpha(1+s_i)\right]=1.035b_i$.\\[0.5mm]
\begin{align}
a_i&=\alpha H^2\left(s_i,\sqrt{2\alpha}\right)+\left[1-H(s_i,\sqrt{2\alpha})\right]^2/s_i \nonumber\\
&<1.035\left[1/\alpha(1+s_i)\right]=1.035b_i \nonumber
\end{align}
%Summing over $i=1, \cdots p$ gives the desired result.

%(\rmnum{2})
The proof of the Bayes in-sample prediction risk is similar to % part (\rmnum{3})
 Theorem \ref{theorem1} and we omit it here.

%(\rmnum{3})
Since $H(S, \sqrt{2\alpha})$ and $S$ are diagonal matrices,
the inequality in Lemma \ref{lemma7} can be  extended to matrix operations, i.e.
$\alpha H^2\left(S,\sqrt{2\alpha}\right)+S^{-1}\left(I-H(S,\sqrt{2\alpha})\right)^2 \prec1.035 \alpha(I+S)^{-1}$.
%Here we used the fact that all matrices in inequality are diagonal, so the claim can reduce to ones about eigenvalues.
Then for the Bayes out-of-sample prediction risk, we have%consider the matrix inside the trace in (\ref{3.13}) and (\ref{3.15}) and obtain
\begin{align}
&\alpha H^2(S,\sqrt{2\alpha})\Sigma+S^{-1}(I-H(S,\sqrt{2\alpha}))^2\Sigma \nonumber\\
&=\left[\alpha H^2(S,\sqrt{2\alpha})+S^{-1}(I-H(S,\sqrt{2\alpha}))^2\right]\Sigma  \nonumber\\
&\prec1.035\left[\alpha(I+S)^{-1}\right]\Sigma. \nonumber
\end{align}
%which gives the desired result.
%Similar to the part (\rmnum{4}) of Theorem \ref{theorem1}, the result of in-sample prediction risk can be proved.
%Comparing (\ref{3.6}), (\ref{3.8}) to (\ref{3.17}), (\ref{3.19}), we can see that the only difference in the in-sample prediction risk expressions is that each summand has been multiplied by $s_i$. Hence the same relative bounds apply termwise, i.e., the arguments for (i) carry over directly here.\\
\end{proof}

\section{Asymptotic Risk Analysis}\label{section5}

In this section, using random-matrix theoretic techniques (e.g.~\cite{Bai2009SpectralAO}),
we study the Bayes risk of MGF in a high-dimensional asymptotic regime.
Note that the Bayes risk of MGF depend only on the predictor matrix $X$ via the eigenvalues of the
(uncentered) sample covariance $\hat{\Sigma} = X^{\rm T}X/n$.
Thanks to Mar$\check{c}$enko-Pastur (M-P) law~\cite{Marenko1967DISTRIBUTIONOE}, we can explore the limiting spectral distribution of
the sample covariance matrix $\hat{\Sigma}$.
The following assumptions are standard  in random-matrix theory.
Given a symmetric matrix $A\in \mathbb{R}^{p\times p}$, recall its spectral distribution
is defined as $F_{A(x)} = (1/p)\sum_{i=1}^{p}\mathds{1}(\lambda_i(A)\leq x)$, where
$\lambda_i(A), i = 1, \cdots, p$ are the eigenvalues of $A$, and $\mathds{1}(\cdot)$
denotes the $0$-$1$ indicator function.\\
{\bf Assumption A1.} The predictor matrix satisfies $X = Z\Sigma^{1/2}$, for a random matrix $Z\in \mathbb{R}^{n\times p}$
of i.i.d. entries with zero mean and unit variance, and a deterministic positive semidefinite covariance $\Sigma\in \mathbb{R}^{n\times p}$.\\
{\bf Assumption A2.} The sample size $n$ and dimension $p$ both diverge, $n,p\rightarrow\infty$
and converges to a limiting aspect ratio $p/n \rightarrow\gamma\in(0,\infty)$.\\
{\bf Assumption A3.} The spectral measure $F_{\Sigma}$ of the predictor covariance $\Sigma$ converges weakly as
$n,p\rightarrow\infty$ to some limiting spectral measure $G$.

Under the above assumptions, the seminal M-P law can be given immediately.

\begin{theorem}M-P law]\label{theorem3}
Under the assumptions A1–A3, almost surely, the spectral measure $F_{\hat{\Sigma}}$ of $\hat{\Sigma}$
converges weakly to a law $F_{G,\gamma}$, called the empirical spectral distribution, that depends only on
$G,\gamma$.
\end{theorem}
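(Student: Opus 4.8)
The plan is to prove convergence through the \emph{Stieltjes transform}, the standard vehicle for weak convergence of spectral distributions. For $z\in\mathbb{C}^+$ (the open upper half-plane), define the Stieltjes transform of $F_{\hat{\Sigma}}$ by
\[
m_n(z)=\int\frac{dF_{\hat{\Sigma}}(x)}{x-z}=\frac{1}{p}\,{\rm tr}\!\left[(\hat{\Sigma}-zI)^{-1}\right].
\]
Since the Stieltjes continuity theorem guarantees that pointwise convergence $m_n(z)\to m(z)$ on $\mathbb{C}^+$ is equivalent to weak convergence $F_{\hat{\Sigma}}\Rightarrow F_{G,\gamma}$, it suffices to exhibit an almost-sure limit $m(z)$ and to show that it is determined by $G$ and $\gamma$ alone.

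First I would establish concentration, i.e. $m_n(z)-\mathbb{E}[m_n(z)]\to 0$ almost surely. Writing $\hat{\Sigma}=\frac{1}{n}\Sigma^{1/2}Z^{\rm T}Z\Sigma^{1/2}=\frac{1}{n}\Sigma^{1/2}\big(\sum_i z_iz_i^{\rm T}\big)\Sigma^{1/2}$ from Assumption~A1, I would reveal the rows $z_i$ of $Z$ one at a time and form the martingale-difference decomposition of $m_n(z)-\mathbb{E}[m_n(z)]$. Each increment is controlled by a rank-one resolvent perturbation: removing a single row changes ${\rm tr}[(\hat{\Sigma}-zI)^{-1}]$ by at most $1/\mathrm{Im}(z)$ in modulus, so that each martingale difference is $O\big(1/(p\,\mathrm{Im}(z))\big)$. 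An Azuma--Hoeffding bound (using $n\asymp p$ from Assumption~A2) together with Borel--Cantelli then yields the almost-sure convergence.

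Next I would derive the self-consistent (fixed-point) equation for the limit. Using the leave-one-out / Sherman--Morrison expansion together with the concentration of quadratic forms $\frac{1}{n}z_i^{\rm T}Az_i\approx\frac{1}{n}{\rm tr}(A)$ (a consequence of the i.i.d., unit-variance entries of $Z$), the expected resolvent quantities satisfy, up to vanishing error, the Marchenko--Pastur--Silverstein equation
\[
\underline{m}(z)=-\left(z-\gamma\int\frac{\tau\,dG(\tau)}{1+\tau\,\underline{m}(z)}\right)^{-1},
\]
written here for the companion transform $\underline{m}$ of $\frac{1}{n}XX^{\rm T}$ and related to $m$ by the standard identity; its coefficients involve only $\gamma$ (Assumption~A2) and $G$ (Assumption~A3). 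The main obstacle lies here: one must rigorously control the error terms in the leave-one-out expansion and show that the deterministic equation admits a \emph{unique} solution in the class of Stieltjes transforms of probability measures, namely analytic maps $\mathbb{C}^+\to\mathbb{C}^+$ with the normalization $z\,m(z)\to-1$ as $z\to\infty$. Uniqueness follows from a contraction/monotonicity argument on the upper half-plane, and it is precisely this uniqueness that forces the limit to depend only on $(G,\gamma)$.

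Assembling the pieces, the almost-sure pointwise limit $m(z)$ exists and equals the unique solution of the fixed-point equation determined by $G$ and $\gamma$. Invoking the Stieltjes continuity theorem then upgrades this to the claimed weak convergence $F_{\hat{\Sigma}}\Rightarrow F_{G,\gamma}$ almost surely, and the dependence of $F_{G,\gamma}$ solely on $(G,\gamma)$ is inherited from the coefficients of the self-consistent equation, completing the proof.
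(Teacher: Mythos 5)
The paper does not actually prove this statement: Theorem~3 is the classical Marchenko--Pastur/Silverstein law, stated with a citation to \cite{Marenko1967DISTRIBUTIONOE} and used as a black box (the only thing the authors add is Remark~4 recording the explicit density in the identity-covariance case). Your proposal, by contrast, reconstructs the standard proof of that classical result, and the outline is the right one: Stieltjes transform plus the continuity theorem to reduce weak convergence to pointwise convergence on $\mathbb{C}^+$, a martingale-difference/Azuma--Hoeffding argument with the rank-one resolvent perturbation bound $|{\rm tr}[(\hat{\Sigma}-zI)^{-1}]-{\rm tr}[(\hat{\Sigma}_{(i)}-zI)^{-1}]|\leq 1/\mathrm{Im}(z)$ for almost-sure concentration, and the leave-one-out/Sherman--Morrison expansion leading to the self-consistent Marchenko--Pastur--Silverstein equation, whose unique solvability in the class of Stieltjes transforms is exactly what makes the limit depend only on $(G,\gamma)$. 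This is Silverstein's argument and each ingredient you name is correct.

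One technical point you should not gloss over if this were to be written out in full: Assumption~A1 only posits i.i.d.\ entries with zero mean and unit variance, with no higher moment control. The concentration of quadratic forms $\frac{1}{n}z_i^{\rm T}Az_i\approx\frac{1}{n}{\rm tr}(A)$, which you invoke to close the self-consistent equation, requires a fourth moment (or better) to get usable variance bounds; under two moments alone one must first truncate and recenter the entries of $Z$ at a slowly growing threshold, show that the truncation does not move the spectral distribution (via a rank/norm perturbation argument), and only then run the leave-one-out machinery. This is a standard but genuinely necessary step, not a formality. With that addition your sketch is a faithful account of the proof the paper is implicitly relying on.
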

\begin{remark}\label{remark4}
%$F_{\hat{\Sigma}}$ is usually described through its Stieltjes transform which is
%$m_{\hat{\Sigma}(z)}=p^{-1}\sum_{i=1}^p(\lambda_i-z)^{-1}=p^{-1}{\rm tr}[(\hat{\Sigma}-zI_p)^{-1}]$,
%where $z\in \mathds{C}\backslash\mathds{R^+}$, and $I_p$ and denotes the $p\times p$ identity matrix.
%Mar$\check{c}$enko-Pastur law indicates that
%$m_{F_\hat{\Sigma}(z)}$ converges weakly to $m_{F_{H,\gamma}}$, where
M-P law has a density function, for $\gamma\leq 1$,
\begin{equation}
P_\gamma(s)=\frac{1}{2\pi\gamma\sigma s}\sqrt{(b-s)(s-a)},
\end{equation}
and has a point mass $1-1/\gamma$ at the origin if $\gamma>1$, where $s\in[a,b]$, $a=\sigma^2(1-\sqrt{\gamma})^2$ and
 $b=\sigma^2(1+\sqrt{\gamma})^2$. If $\sigma^2=1$, the M-P law is said to be the standard M-P law.
\end{remark}

The limiting Bayes risk of MGF follows from (\ref{3.6}) directly.
\begin{theorem}\label{theorem4}
Under the assumptions  A1–A3, the data model (\ref{3.1}) and the prior (\ref{3.3}), for $t\geq0$, the Bayes risk (\ref{3.6}) of MGF
converges almost surely to
\begin{equation}\label{5.2}
\sigma^2\gamma\int\left[\alpha_0H^2(s,t)+\frac{(1-H(s,t))^2}{s}\right]dF_{G,\gamma}(s),
\end{equation}
where $\alpha_0=r^2/(\sigma^2\gamma)$, and $F_{G,\gamma}(s)$ is the empirical spectral distribution form M-P law.
\end{theorem}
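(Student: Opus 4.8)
The plan is to recognize the finite-sample Bayes risk (\ref{3.6}) as a \emph{linear spectral statistic} of the sample covariance $\hat{\Sigma}=X^{\rm T}X/n$ and then pass to the limit using the M-P law (Theorem \ref{theorem3}). Writing $g_\alpha(s)=\alpha H^2(s,t)+(1-H(s,t))^2/s$, I first rewrite
\begin{equation}
{\rm Risk}(\hat{\beta}^{\rm mgf}(t))=\frac{\sigma^2}{n}\sum_{i=1}^{p}g_\alpha(s_i)=\sigma^2\,\frac{p}{n}\int g_\alpha(s)\,dF_{\hat{\Sigma}}(s),\nonumber
\end{equation}
where $F_{\hat{\Sigma}}$ is the empirical spectral distribution of $\hat{\Sigma}$ and $\alpha=r^2 n/(\sigma^2 p)$. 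The target (\ref{5.2}) is $\sigma^2\gamma\int g_{\alpha_0}(s)\,dF_{G,\gamma}(s)$ with $\alpha_0=r^2/(\sigma^2\gamma)$. Thus the proof reduces to three convergences: the scalar prefactor $p/n\to\gamma$, the parameter $\alpha\to\alpha_0$, and the spectral integral $\int g_\alpha\,dF_{\hat{\Sigma}}\to\int g_{\alpha_0}\,dF_{G,\gamma}$.

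The two scalar limits are immediate from Assumption A2: $p/n\to\gamma$ and $\alpha=(r^2/\sigma^2)(n/p)\to(r^2/\sigma^2)/\gamma=\alpha_0$. To handle the parameter $\alpha$ drifting inside the integrand, I would split
\begin{equation}
\int g_\alpha\,dF_{\hat{\Sigma}}-\int g_{\alpha_0}\,dF_{G,\gamma}=\int(g_\alpha-g_{\alpha_0})\,dF_{\hat{\Sigma}}+\Big(\int g_{\alpha_0}\,dF_{\hat{\Sigma}}-\int g_{\alpha_0}\,dF_{G,\gamma}\Big).\nonumber
\end{equation}
Since $g_\alpha-g_{\alpha_0}=(\alpha-\alpha_0)H^2(s,t)$ and $H^2(s,t)$ is uniformly bounded by Lemma \ref{lemma6}(\rmnum{1}), the first term is at most $|\alpha-\alpha_0|\cdot\sup_s H^2(s,t)\to0$; the second term is a genuine weak-convergence statement for the fixed test function $g_{\alpha_0}$.

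The heart of the argument is therefore to upgrade the weak convergence $F_{\hat{\Sigma}}\Rightarrow F_{G,\gamma}$ (Theorem \ref{theorem3}) to convergence of $\int g_{\alpha_0}\,dF_{\hat{\Sigma}}$. For this it suffices to verify that $g_{\alpha_0}$ is a bounded continuous function on $[0,\infty)$, so that the portmanteau theorem applies to these probability measures. Boundedness follows from Lemma \ref{lemma6}: part (\rmnum{1}) controls $\alpha_0 H^2(s,t)$, while part (\rmnum{2}) gives $(1-H(s,t))^2/s<1.0816\,(st^4/4)/(1+st^2/2)^2$, which stays bounded and decays to $0$ as $s\to\infty$. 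Continuity on $(0,\infty)$ holds because, under the overdamped hypothesis $D(\mu)\succ2S^{1/2}$ of Lemma \ref{lemma1}, the discriminant $\sqrt{\mu^2-4s}$ stays real and strictly positive on the almost-surely compact spectral support, so $H(\cdot,t)$ is smooth there.

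The main obstacle is the apparent $1/s$ singularity of the variance term at the lower spectral edge, which is also where the M-P law may place a point mass (at $0$ when $\gamma>1$). I would resolve it by showing the singularity is removable: Lemma \ref{lemma6}(\rmnum{2}) forces $(1-H(s,t))^2/s\to0$ as $s\to0^+$, while $H(0,t)=1$ (the $s\to0$ limit where $\sqrt{\mu^2-4s}\to\mu$ collapses the bracketed expression to $1$), so $g_{\alpha_0}$ extends continuously to $s=0$ with value $\alpha_0$. Hence $g_{\alpha_0}\in C_b[0,\infty)$, the point mass at the origin contributes the finite value $\alpha_0$, and the portmanteau theorem yields $\int g_{\alpha_0}\,dF_{\hat{\Sigma}}\to\int g_{\alpha_0}\,dF_{G,\gamma}$ almost surely. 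Combining this with the two scalar limits through a Slutsky-type argument gives the claimed almost-sure convergence of (\ref{3.6}) to (\ref{5.2}).
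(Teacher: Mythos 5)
Your proof is correct and takes essentially the same route as the paper's: rewrite the Bayes risk as $\sigma^2(p/n)\int\bigl[\alpha H^2(s,t)+(1-H(s,t))^2/s\bigr]dF_{\hat{\Sigma}}(s)$ and pass to the limit via the weak convergence $F_{\hat{\Sigma}}\Rightarrow F_{G,\gamma}$ of Theorem \ref{theorem3}. The paper's own proof is a two-line sketch of exactly this; your extra care with the drifting parameter $\alpha\to\alpha_0$, the boundedness and continuity of the test function via Lemma \ref{lemma6}, and the removable singularity of $(1-H(s,t))^2/s$ at the origin (the point that matters when $\gamma>1$ and $F_{G,\gamma}$ has an atom at $0$) supplies details the paper leaves implicit.
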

\begin{proof}
The Bayes risk (\ref{3.6}) of MGF can be rewritten as
$\sigma^2p/n[\int\alpha H^2(s,t)+(1-H(s,t))^2/s]dF_{\hat{\Sigma}}(s)$.
Note that $F_{\hat{\Sigma}}$ converges weakly to $F_{G,\gamma}$ by M-P law, which implies that
$\sigma^2p/n[\int\alpha H^2(s,t)+(1-H(s,t))^2/s]dF_{\hat{\Sigma}}(s)\rightarrow$
$\sigma^2\gamma[\int\alpha_0 H^2(s,t)+(1-H(s,t))^2/s]dF_{G,\gamma}(s)$.
\end{proof}

\begin{remark}
With the same conditions as  Theorem \ref{theorem4},
%Under the conditions of Theorem \ref{theorem4},
 for each $\lambda >0$, the Bayes risk (\ref{3.8}) of ridge
converges almost surely to
%$\sigma^2\gamma\int\frac{\alpha_0\lambda^2+s}{(s+\lambda)^2}dF_{G,\gamma}$.
\begin{equation}\label{5.3}
\sigma^2\gamma\int\frac{\alpha_0\lambda^2+s}{(s+\lambda)^2}dF_{G,\gamma}.
\end{equation}
The proof is similar to Theorem \ref{theorem4} and we omit it here.
The limiting Bayes in-sample prediction risks of MGF and ridge can be found in Supplement %\ref{AA}.
%Moreover, the results of the limiting Bayes in-sample prediction risk of MGF and ridge are
%\begin{align*}
%&\sigma^2\gamma\int\left[\alpha_0H^2(s,t)+\frac{(1-H(s,t))^2}{s}\right]sdF_{G,\gamma}(s), \\
%&\sigma^2\gamma\int\frac{\alpha_0\lambda^2+s}{(s+\lambda)^2}sdF_{G,\gamma},
%\end{align*}
%respectively.
For the out-of-sample prediction risk, it is difficult to find the explicit limit in general, since (\ref{3.6})
involves both the sample covariance matrix $\hat{\Sigma}$ and the population covariance matrix $\Sigma$,
and (\ref{3.6}) is a complicate function of eigenvalues of $\hat{\Sigma}$.
However, if
the population covariance matrix
$\Sigma$ is an identity matrix,
the limiting Bayes (out-of-sample) prediction risk
follows from Theorem \ref{theorem4} directly.
 %will be reduced to the result of risk
%as claimed in  Theorem \ref{theorem4}.
%
%If the population covariance matrix $\Sigma$ is an identity matrix, the limiting Bayes (out-of-sample)
%prediction risk is reduced to Theorem \ref{theorem4}. Otherwise, it is difficult here to find
%explicit limits of Bayes (out-of-sample) prediction risk, since (\ref{3.6})
%involves both the sample $\hat{\Sigma}$ and the population covariance matrix $\Sigma$
%and (\ref{3.6}) is not a simple function of both of them.
The general case will be considered in our future works.
\end{remark}

\section{Numerical Examples}\label{section6}

In this section, we provide numerical examples to verify the theoretical results of
the relative Bayes risk of MGF to ridge in Section \ref{section4} and its asymptotic risk  expressions in Section \ref{section5},
%bound of
%the related risk in Section \ref{section4} and the asymptotic risk  expressions in Section \ref{section5}.
and
compare the coupling of MGF to ridge and that of GF (gradient flow, which is the continuous-time form of GD) to ridge
%to ridge and that of GD to ridge %conduct the numerical experiments
under the same settings as~\cite{Ali2019}
(which presents the comparison of the paths of GF and ridge).
%the  results about the relation between GD and ridge)
%to compare the results of GD.
%
%note that \cite{Ali2019} presented excellent results about the relation between GD and ridge.
%Thus we conduct the numerical experiments under the same settings as~\cite{Ali2019} to compare the results of GD.
In details, we generate features via $X=\Sigma^{1/2}Z$, for a matrix $Z$ with i.i.d.
entries from a standard Gaussian and set $\Sigma=I, n=1000, p=500$, and $\sigma^2=r^2=1$.
We set the momentum parameter $D(\mu)={\rm diag}(\mu_i)={\rm diag}(2\sqrt{s_i}+10^{-3})(i=1,\cdots,p)$ in all the experiments,
%Moreover, the momentum parameter throughout is set to $D(\mu)={\rm diag}(\mu_i)={\rm diag}(2\sqrt{s_i}+10^{-3})(i=1,\cdots,p)$
where $D(\mu)\succ2S^{1/2}$ and $s_i$ is the eigenvalues of the sample covariance matrix.
%
%Moreover,  we note that when the momentum parameter is set to $D(\mu)={\rm diag}(\mu_i)=2\sqrt{s_i}+10^{-3}(i=1,\cdots,p)$
%according to the eigenvalue, MGD performs better than applying a common momentum parameter in all directions.
%
%where $\varepsilon=10^{-3}$ is a sufficiently small positive constant.
%Let $D(\mu)={\rm diag}(\mu_i)(i=1,\cdots,p)$, where the diagonal elements satisfy $\mu_1\geq \mu_2\geq \cdots \mu_p>0$.
And the results for other settings (the results are grossly similar) can be found in \emph{Supplement}.% \ref{NR}}.
%In this section, we provide numerical examples to support our theoretical results:
%the related risk bound in Section \ref{section4} and asymptotic risk  expressions in Section \ref{section5}.
%In order to compare the results in~\cite{Ali2019}, which showed excellent results in studying the relation of GD and ridge,
%we perform the numerical experimental under the same settings as~\cite{Ali2019}.
%In details, we generated features via $X=\Sigma^{1/2}Z$, for a matrix $Z$ with i.i.d.
%entries from a standard Gaussian and set $\Sigma=I, n=500, p=1000$, and $\sigma^2=r^2=1$.
%And the results for other settings (the results are grossly similar) can be found in \emph{Supplement}.
%For space reasons, the details can be found in supplement.
\begin{figure}[!h]
\centering
\includegraphics[width=9cm, height=5cm]{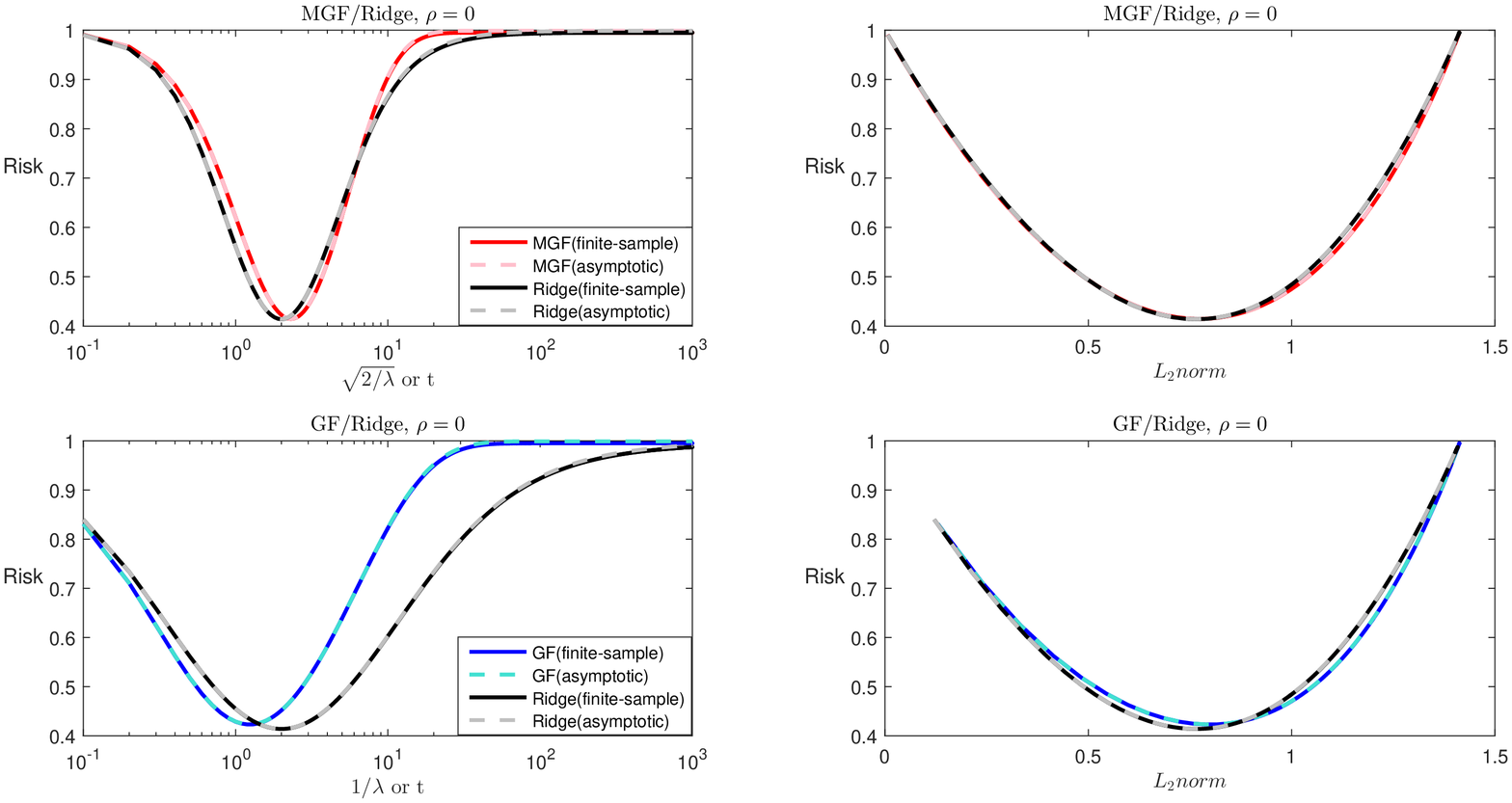}
\caption{\footnotesize{Comparison of the Bayes risks for MGF, GF and ridge, with Gaussian features, $\Sigma=I, n=1000, p=500$.}}\label{figure2}
\end{figure}

In Figure \ref{figure2}, %shows the performances for Gaussian features, $\Sigma=I, n=500$ and $p=1000$;
%the supplement displays results for other settings (the results are grossly similar).
we plot MGF versus ridge (calibrated according to $\lambda=2/t^2)$ and GF
 versus ridge (calibrated according to $\lambda=1/t$ ~\cite{Ali2019})  in the first column.
It shows that there is a fairly strong agreement between the risk curves,
and MGF is much closer to ridge than GF over the entire path;
the maximum ratio of the Bayes risk of MGF to ridge is 1.1097 (cf. the upper bound of 1.5376 from Theorem \ref{theorem1})
and the maximum ratio of the optima is 1.0208 (cf. the upper bound of 1.035 from Theorem \ref{theorem2}),
which are lower than that of GF to ridge, which are 1.3663 and 1.0914, respectively
(cf. the  theoretical maximum ratio and the maximum ratio of the optima~\cite{Ali2019} are 1.6862 and 1.2147, respectively);
in addition, it shows that MGF  converges to ridge faster than GD, which is compatible with
the theoretical results (the tuning parameter $\lambda$ of ridge is proportional to
$\mathcal{O}(1/t^2)$ in MGF and  GF requires $\mathcal{O}(1/t)$).
The second column shows the remarkable agreement of the risk over the whole path
when parameterized by the $\ell_2 $ norm of the underlying estimator (more details can be found \emph{Supplement}.% \ref{NR}}).
And MGF is closer to ridge than GD, too.
%And the maximum ratio of MGF to ridge is 1.0227, while GD is 1.2594.
Moreover, the four plots show that the finite-sample and asymptotic risk curves are identical,
which implies that the convergence in Theorem \ref{theorem4} is rapid.

\section{Conclusion}
The present paper studied MGF for the least squares, and characterized the close connections between MGF and ridge.
In theoretical aspect, we proved that
the risk of MGF is no more than 1.54 times that of ridge under the calibration $t=\sqrt{2/\lambda}$.
In particular, the relative Bayes risk of MGF to ridge is between 1 and 1.035 under the optimal tuning.
The numerical experiments showed that the paths of MGF and ridge are strikingly similar.
Compared with GD~\cite{Ali2019}, our work showed the tighter coupling between
MGD and ridge, both theoretically and experimentally.

There are many worthwhile directions for the further work. In particular, it would be interesting to
explore how hyperparameters (e.g. momentum parameters and learning rate) affect the generalization performance
of MGD (or other acceleration optimization  algorithms, e.g. Nesterov) for other linear models.
It would also be interesting to
%theoretical
explain why there is a much tighter coupling of MGF to ridge and that of GF to
ridge under $\ell_2$ norms calibration in theory. More generally, we hope that our work will draw attention to
the exploration of the implicit regularization of the accelerated optimization algorithms in deep learning,
especially deep nonlinear neural networks.

\section*{Acknowledgments}
This work was supported by NSFC-61872076 and Natural Science Foundation of Jilin Province 20200201161JC.
\clearpage
\bibliographystyle{named}
\bibliography{ijcai22}

\end{document}